\documentclass{article}

\usepackage{microtype}
\usepackage{graphicx}
\usepackage{subcaption}
\usepackage{booktabs} 
\usepackage{multirow}

\usepackage{hyperref}

\usepackage[preprint]{icml2026}

\usepackage{amsmath}
\usepackage{amssymb}
\usepackage{mathtools}
\usepackage{amsthm}

\usepackage[capitalize,noabbrev]{cleveref}

\theoremstyle{plain}
\newtheorem{theorem}{Theorem}[section]
\newtheorem{proposition}[theorem]{Proposition}

\theoremstyle{definition}
\newtheorem{definition}[theorem]{Definition}

\theoremstyle{remark}

\theoremstyle{definition}
\newtheorem{remark-bold}[theorem]{Remark}

\usepackage{algorithm}
\usepackage{outlines}

\theoremstyle{hypothesis}

\usepackage[textsize=tiny]{todonotes}

\icmltitlerunning{Bias-Constrained Diffusion Schedules for PDE Emulations}

\begin{document}

\twocolumn[
  \icmltitle{Bias-Constrained Diffusion Schedules for PDE Emulations: \\  Reconstruction Error Minimization and Efficient Unrolled Training}



  \icmlsetsymbol{equal}{*}

  \begin{icmlauthorlist}
    \icmlauthor{Constantin Le Cleï}{yyy}
    \icmlauthor{Nils Thuerey}{yyy}
    \icmlauthor{Xiaoxiang Zhu}{yyy}
  \end{icmlauthorlist}

  \icmlaffiliation{yyy}{Technical University of Munich, Munich, Germany}

  \icmlcorrespondingauthor{Constantin Le Cleï}{constantin.le.clei@tum.de}

  \icmlkeywords{Machine Learning, ICML}

  \vskip 0.3in
]



\printAffiliationsAndNotice{}  

\begin{abstract}
    Conditional Diffusion Models are powerful surrogates for emulating complex spatiotemporal dynamics, yet they often fail to match the accuracy of deterministic neural emulators for high-precision tasks. In this work, we address two critical limitations of autoregressive PDE diffusion models: their sub-optimal single-step accuracy and the prohibitive computational cost of unrolled training. First, we characterize the relationship between the noise schedule, the reconstruction error reduction rate and the diffusion exposure bias, demonstrating that standard schedules lead to suboptimal reconstruction error. Leveraging this insight, we propose an \textit{Adaptive Noise Schedule} framework that minimizes inference reconstruction error by dynamically constraining the model's exposure bias. We further show that this optimized schedule enables a fast \textit{Proxy Unrolled Training} method to stabilize long-term rollouts without the cost of full Markov Chain sampling. Both proposed methods enable significant improvements in short-term accuracy and long-term stability over diffusion and deterministic baselines on diverse benchmarks, including forced Navier-Stokes, Kuramoto-Sivashinsky and Transonic Flow.
\end{abstract}

\section{Introduction}

Machine learning-based emulators have achieved performance comparable to traditional numerical solvers for fluid dynamics tasks, operating at a fraction of the computational cost. Among these, Diffusion Models have recently demonstrated strong performance while resolving key limitations of deterministic approaches, such as instability \cite{kohl2023benchmarking, ruhling2023dyffusion, lippe2023pde}, and over-smoothing over long simulation horizons \cite{liu2025rolling}. Beyond stability, they allow to obtain uncertainty estimates \cite{liu2024uncertainty}, can be used flexibly across tasks \cite{shysheya2024conditional} for super-resolution and forecasting, and can sample multiple consistent scenarios \cite{price2023gencast}

However, Diffusion Models typically lag behind deterministic baselines in terms of pure reconstruction accuracy, which can lead to biased estimates despite their probabilistic formulation. Furthermore, because their sampling process requires tens to hundreds of function evaluations, standard stabilization techniques, such as multi-step unrolled training, are computationally prohibitive to apply directly.

In this work, we argue that these limitations are intrinsically linked through the phenomenon of \textit{Diffusion Exposure-Bias} \cite{li2023alleviating}. This refers to the performance drop caused by the discrepancy between the ground-truth noisy targets seen during training and the estimated noisy states generated via ancestral sampling at inference.  In the case of spatio-temporal forecasting, we demonstrate that standard conditional diffusion models suffer from this bias and obtain a sub-optimal error, mainly because \textit{their noise schedules are not designed to align the rate of error reduction with the intrinsic model capacity and task difficulty}. In particular, we make the following contributions:
\begin{itemize}
    \item  We define the \textit{Reconstruction Exposure-Bias}, a special case of exposure-bias, which denotes the discrepancy between intermediate reconstruction errors during inference and training. We experimentally show that the main driver of this bias is the rate of decrease of the reconstruction error as a function of the noise level.
    \item We propose a novel \textit{Adaptive Scheduling} algorithm that treats schedule design as a final noise-level minimization problem under the constraint that the model stays within stability at every denoising step, thus jointly optimizing the Final Reconstruction Error and the Reconstruction Exposure-Bias 
    \item Connection between the exposure-biases : Reducing the diffusion exposure-bias naturally leads to a fast in-distribution proxy for the full diffusion process, only requiring a few steps to provide an accurate sample which can be used for Unrolled training, therefore reducing simulation exposure-bias.
    \item  Our proposed adaptive schedule systematically reduces first-step reconstruction error, while the fast proxy unrolled-training drastically mitigates artifacting effects, leading to an improvement multiple orders of magnitude in Fréchet Spectral Distance on Kolmogorov turbulent flow (see Table~1).
\end{itemize}

\section{Related Work}

\paragraph{Diffusion for fluid flows} Diffusion models \cite{ho2020denoising, sohl2015deep} have been shown to perform well for autoregressively generating videos \cite{ho2022video}, naturally expanding to other spatio-temporal tasks such as weather forecasting \cite{price2023gencast} or time-series \cite{shen2024multi}. In the context of fluid dynamics, there have been multiple applications : \cite{kohl2023benchmarking} demonstrated that diffusion models can predict fluid states over extended horizons while preserving both sample quality and temporal stability. For fluid-flow reconstruction, \cite{shu2023physics} leveraged DDPMs for turbulent flow super-resolution, accurately reconstructing high-fidelity fields from low-resolution inputs, while \cite{rozet2023score} relied on score-based generative models for reconstruction. For 3D turbulent flows, \cite{lienen2306zero} achieved fast spatio-temporal prediction. \cite{liu2025rolling} merged the time and diffusion axis to produce fast physically plausible samples on very long horizons.

\paragraph{Diffusion Hyperparameters} Noise schedules are a central component of diffusion models, determining the rate of information destruction and the weighting of the learning objective \cite{kingma2021variational}. The design of these schedules has evolved from heuristic linear and cosine formulations \cite{ho2020denoising,nichol2021improved} to principled parameterizations based on the Signal-to-Noise Ratio (SNR) \cite{kingma2021variational,choi2022perception} and the continuous noise level formulations of the EDM framework \cite{karras2022elucidating}. However, these schedules are predominantly optimized for perceptual fidelity, prioritizing regimes where visual features emerge. This focus is ill-suited for high-precision fluid dynamics tasks, where the objective is not only to generate physically plausible fields, but also to remain tightly correlated with a specific ground-truth trajectory. In particular, \cite{lippe2023pde} has shown that schedules that focus on very low noise-levels tend to perform well on turbulence tasks.

\paragraph{Exposure bias in diffusion models.} This phenomenon arises from the mismatch between training, where the model sees ground-truth noisy data, and sampling, where it denoises its own predictions, causing errors to accumulate along the reverse trajectory. Different method have been developed to mitigate this effect, including scaling the predicted noise \cite{ning2023elucidating}, shifting sampling timesteps \cite{li2023alleviating} or adding a regularization term in the diffusion loss \cite{daras2023consistent}. For PDE simulations however, the target posterior is highly concentrated, unlike the broad distributions typical of unconditional image generation. Consequently, this regime requires a tailored framework to address its specific constraints.

\paragraph{External Loss terms and Unrolled Training} Constraining the output of diffusion models has become an active area of research, particularly through guidance mechanisms \cite{bansal2023universal, ho2022classifier}. In the context of trajectory generation, these constraints serve a dual purpose: facilitating accurate autoregressive predictions and improving robustness to error accumulation during unrolled training. In fluid dynamics tasks, physical consistency losses are typically enforced by approximating the clean output via DDIM shortcuts \cite{bastek2024physics} or linear interpolation \cite{amoros2026guiding}. However, such approximations are insufficient for unrolled training; they lack the fidelity required to mimic the model's actual inference behavior. Consequently, the model fails to learn how to correct its own generated artifacts, potentially leading to gradient inaccuracies. Specialized strategies to stabilize autoregressive diffusion have also emerged : \cite{chen2024diffusion} condition the model on noise-corrupted histories, while \cite{huang2025self} relies on cached prior predictions rather than ground-truth. Alternative strategies for estimating the final output include Consistency Models \cite{song2023consistency, stock2025swift} or diffusion shortcuts \cite{shehata2025improved}.

\section{Schedule, Reconstruction Error, and Exposure-Bias}

\subsection{Background: Conditional Diffusion Models for Autoregressive Fluid Simulations}

For Fluid Dynamics tasks, our goal is to construct a neural operator $\mathcal{M}_\theta$ that emulates the ground-truth time evolution of a fluid. Given an initial condition $\mathbf{x}^0$, we approximate the subsequent trajectory $\{\mathbf{x}^1, \dots, \mathbf{x}^K\}$ via autoregressive estimates:
\begin{align}
    \hat{\mathbf{x}}^1 &= \mathcal{M}_\theta(\mathbf{x}^0), \\
    \hat{\mathbf{x}}^k &= \mathcal{M}_\theta(\hat{\mathbf{x}}^{k-1}) \quad \forall k \in \{2, \dots, K\}.
\end{align}

Conditional Denoising Diffusion Probabilistic Models (CDDPMs) learn the conditional distribution $p(\mathbf{x}^{k+1} \mid \mathbf{x}^k)$ by reversing a gradual noise-addition process. The sampling operator $\mathcal{M}_\theta$ iteratively refines a noise sample $\mathbf{y}_T \sim \mathcal{N}(\mathbf{0}, \mathbf{I})$ into a prediction $\hat{\mathbf{x}}^{k+1}$ conditioned on the previous state $\mathbf{x}^k$. A broader introduction can be found in Appendix \ref{prelim-appendix}. To simplify notation for the remainder of this section, we denote the condition as $\mathbf{x}$ and the target as $\mathbf{y}$. We adopt the $\boldsymbol{\epsilon}$-prediction formulation for the diffusion process, where at each step a denoiser $\boldsymbol{\epsilon}_\theta$ predicts the Gaussian noise added to the target $\mathbf{y} = \mathbf{y}_0$. The reverse process starts by sampling $\hat{\mathbf{y}}_{T} \sim \mathcal{N}(\mathbf{0}, \mathbf{I})$, and the following denoising steps are given by:
\begin{align}
    \hat{\mathbf{y}}_{t-1} &= \tilde{\mu}_t\!\left(\hat{\mathbf{y}}_t,\, \boldsymbol{\epsilon}_\theta(\hat{\mathbf{y}}_t, \mathbf{x}, \sigma_t)\right) + \sqrt{\tilde{\beta}_t}\,\boldsymbol{\epsilon}_{t-1},
    \label{eq:sampling}
\end{align}
where $t \in \{1, \dots, T\}$, $\sigma_t \triangleq \sqrt{1 - \bar{\alpha}_t}$ is the noise level at time $t$, $\boldsymbol{\epsilon}_t \sim \mathcal{N}(\mathbf{0}, \mathbf{I})$, and the posterior mean and variance are:
\begin{align}
    \tilde{\mu}_t(\mathbf{y}_t, \boldsymbol{\epsilon}_\theta) &= \frac{1}{\sqrt{\alpha_t}}\!\left(\mathbf{y}_t - \frac{\beta_t}{\sigma_t}\,\boldsymbol{\epsilon}_\theta\right), \label{eq:ddpm_mean}\\
    \tilde{\beta}_t &= \frac{1 - \bar{\alpha}_{t-1}}{1 - \bar{\alpha}_t}\,\beta_t. \label{eq:ddpm_var}
\end{align}
The model is trained to minimize the reconstruction loss:
\begin{align}
    \mathcal{L}_{\text{diff}}(\theta) &= \mathbb{E}_{\mathbf{x},\mathbf{y}, t, \boldsymbol{\epsilon}} \left[\| \boldsymbol{\epsilon}_\theta(\tilde{\mathbf{y}}_t, \mathbf{x}, \sigma_t) - \boldsymbol{\epsilon} \|^2 \right],
    \label{eq:l_diff}
\end{align}
where $\boldsymbol{\epsilon} \sim \mathcal{N}(\mathbf{0}, \mathbf{I})$ and $\tilde{\mathbf{y}}_t = \sqrt{\bar{\alpha}_t}\mathbf{y} + \sigma_t \boldsymbol{\epsilon}$ is the noisy ground-truth target. Given any noisy input $\mathbf{y}_t$ at noise level $\sigma_t$, the neural estimate of the clean signal is:
\begin{align}
    \mathbf{y}_{est}(\mathbf{y}_t, \mathbf{x}, \sigma_t) = \frac{\mathbf{y}_t - \sigma_t\boldsymbol{\epsilon}_\theta(\mathbf{y}_t, \mathbf{x}, \sigma_t)}{\sqrt{\bar{\alpha}_t}}.
\end{align}
During training, this is evaluated on the ground-truth noisy target $\tilde{\mathbf{y}}_t$; during inference, on the sampling iterate $\hat{\mathbf{y}}_t$.
Throughout this work, we use the following simplified one-step transition as a practical approximation to the full DDPM posterior~\eqref{eq:sampling}:
\begin{align}
    \hat{\mathbf{y}}_{t-1} = \sqrt{\bar{\alpha}_{t-1}}\,\mathbf{y}_{est}(\hat{\mathbf{y}}_{t}, \mathbf{x}, \sigma_t) + \sigma_{t-1}\,\boldsymbol{\epsilon}_t
    \label{eq:simplified_sampling}
\end{align}
i.e., the forward noising process applied to the predicted clean signal, bypassing the explicit dependence on $\hat{\mathbf{y}}_t$ in the mean. In the rest of this work, we will discard the dependency of $\mathbf{x}$ and $\sigma_t$ in $\mathbf{y}_\mathrm{est}(\cdot)$ to improve readability.

\subsection{Problem Definition: Minimizing Reconstruction Error}

While metrics like FID assess distributional quality, precision tasks such as PDE modeling require minimizing the specific trajectory error relative to the ground truth. In this setup, we aim to identify the schedule that leads to the optimal reconstruction error. We define the \textit{Optimal-Error Schedule} $\mathcal{S}^*$ as the configuration that minimizes the final inference reconstruction error:
\begin{align}
    \mathcal{S}^* = \operatorname*{argmin}_{\{\mathcal{S}\}} \mathbb{E}_{\mathbf{x}, \mathbf{y}, \boldsymbol{\epsilon}} \left[ \| \mathcal{M}_{\theta^*(\mathcal{S})}(\mathbf{x}) - \mathbf{y} \| \right],
    \label{eq:min_error}
\end{align}
where $\theta^{*}(\mathcal{S})$ denotes the model parameters converged via \eqref{eq:l_diff} under schedule $\mathcal{S}$. In the following, we fix a trained model and a pair $(\mathbf{x}, \mathbf{y})$, and omit these dependencies from the notation; all quantities are implicitly functions of $(\mathbf{x}, \mathbf{y})$, and $\theta$ is reintroduced from Proposition~\ref{prop:instability_threshold} onwards where model comparisons are needed. To analyze how the schedule influences (\ref{eq:min_error}), we first define the concept of reconstruction exposure-bias:

\begin{definition}[\textbf{Clean-Input vs. Inference-Input Error}]
    The \textbf{Clean-Input Error} measures prediction error when the state input is the \textit{noised ground-truth state} $\tilde{\mathbf{y}}_t$:
    \[
    \mathcal{E}^{\text{clean}}(t) = \mathbb{E}_{\boldsymbol{\epsilon}}  \| \mathbf{y}_{est}(\tilde{\mathbf{y}}_t) - \mathbf{y} \|
    \]

    The \textbf{Inference-Input Error} measures error when the state input is  the \textit{ancestor-sampling state} $\hat{\mathbf{y}}_t$:
    \[
    \mathcal{E}^{\text{inf}}(t) = \mathbb{E}_{\boldsymbol{\epsilon}} \left[ \| \mathbf{y}_{est}(\hat{\mathbf{y}}_t) - \mathbf{y} \| \right].
    \]
\end{definition}
The divergence between these two via the \textbf{Reconstruction Exposure-Bias (REB):}
\[\mathrm{REB}(t) \triangleq \frac{\mathcal{E}^{\text{inf}}(t)}{\mathcal{E}^{\text{clean}}(t)}\]

\subsection{Decomposing the REB: Two-Steps Bias as the Primary Driver}

We now analyze the sources of REB by introducing the \textit{Two-Steps Bias}, isolating the degradation that occurs in a single transition.

\begin{definition}[\textbf{Two-Steps Bias}]
    Define the \textit{two-steps noisy state} at time $t$ as renoising the estimate of $y$ obtained from a \textit{noised ground-truth state} as input at time $t+1$:
    \begin{equation}
        \hat{\mathbf{y}}_{t}^{(2S)} \triangleq \sqrt{\bar{\alpha}_{t}}\,\mathbf{y}_{est}(\tilde{\mathbf{y}}_{t+1}) + \sigma_{t}\,\boldsymbol{\epsilon}, \label{eq:y_2S} 
        \end{equation}
    The \textit{Two-Steps Bias} is defined as the ratio between the error of the second estimate and the clean-input error at time $t$:
    \begin{align}
        \mathcal{B}^{(2S)}(t) \triangleq \frac{\mathbb{E}_{\boldsymbol{\epsilon}}\|\mathbf{y}_{est}(\hat{\mathbf{y}}^{(2S)}_t) - \mathbf{y}\|}{\mathcal{E}^{\mathrm{clean}}(t)}
    \end{align}
\end{definition}

The following proposition shows that the REB is dominated by the individual two-steps biases.

\begin{proposition}[\textbf{Re-noising Attenuation}]
\label{prop:renoising_attenuation}
When estimated errors are nearly aligned, the following recursive bound holds:
\begin{align*}
    \mathrm{REB}(t) \;\lesssim\; \mathcal{B}^{(2S)}(t) \;+\; \lambda_t\,\mathrm{REB}(t+1),
\end{align*}
where $\lambda_t := \|J_t\|\cdot\sqrt{\bar{\alpha}_t}\cdot\frac{\|\mathbf{y}_{est}(\tilde{\mathbf{y}}_{t+1})-\mathbf{y}\|}{\|\mathbf{y}_{est}(\tilde{\mathbf{y}}_t)-\mathbf{y}\|}$. The REB is thus driven by the local two-step bias, and attenuated (resp. amplified) across steps when $\lambda_t < 1$ (resp. $\lambda_t > 1$). A full bound without the alignment assumption is given in Appendix~\ref{app:proof_renoising}.
\end{proposition}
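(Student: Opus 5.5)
The plan is to compare the inference iterate $\hat{\mathbf{y}}_t$ with the two-steps state $\hat{\mathbf{y}}_t^{(2S)}$ and linearize $\mathbf{y}_{est}$ around the latter. By the simplified transition \eqref{eq:simplified_sampling} and the definition \eqref{eq:y_2S}, using the same noise draw $\boldsymbol{\epsilon}$ for both, we have
\begin{align*}
\hat{\mathbf{y}}_t - \hat{\mathbf{y}}_t^{(2S)} = \sqrt{\bar{\alpha}_t}\,\bigl(\mathbf{y}_{est}(\hat{\mathbf{y}}_{t+1}) - \mathbf{y}_{est}(\tilde{\mathbf{y}}_{t+1})\bigr).
\end{align*}
The re-noising term $\sigma_t\boldsymbol{\epsilon}$ cancels, so the discrepancy is only the rescaled difference of the two clean estimates at step $t+1$. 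I would then write $\mathbf{y}_{est}(\hat{\mathbf{y}}_t) - \mathbf{y} = \bigl(\mathbf{y}_{est}(\hat{\mathbf{y}}_t^{(2S)}) - \mathbf{y}\bigr) + \bigl(\mathbf{y}_{est}(\hat{\mathbf{y}}_t) - \mathbf{y}_{est}(\hat{\mathbf{y}}_t^{(2S)})\bigr)$. I would bound the second bracket by a first-order Taylor expansion with Jacobian $J_t = \partial \mathbf{y}_{est}/\partial \mathbf{y}_t$ evaluated near $\hat{\mathbf{y}}_t^{(2S)}$, giving $\|J_t\|\sqrt{\bar{\alpha}_t}\,\|\mathbf{y}_{est}(\hat{\mathbf{y}}_{t+1}) - \mathbf{y}_{est}(\tilde{\mathbf{y}}_{t+1})\|$ plus a higher-order remainder. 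This remainder is what ``$\lesssim$'' absorbs.

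The triangle inequality alone would leave the difference of estimates at $t+1$. Bounding that difference by $\|\mathbf{y}_{est}(\hat{\mathbf{y}}_{t+1})-\mathbf{y}\| + \|\mathbf{y}_{est}(\tilde{\mathbf{y}}_{t+1})-\mathbf{y}\|$ would give an extra additive term. This is where the alignment hypothesis enters. If the error vectors $\mathbf{e}^{\inf}_{t+1} = \mathbf{y}_{est}(\hat{\mathbf{y}}_{t+1})-\mathbf{y}$ and $\mathbf{e}^{\mathrm{clean}}_{t+1} = \mathbf{y}_{est}(\tilde{\mathbf{y}}_{t+1})-\mathbf{y}$ are nearly parallel, their difference has norm approximately $\|\mathbf{e}^{\inf}_{t+1}\| - \|\mathbf{e}^{\mathrm{clean}}_{t+1}\| \le \|\mathbf{e}^{\inf}_{t+1}\| = \mathrm{REB}(t+1)\,\|\mathbf{e}^{\mathrm{clean}}_{t+1}\|$. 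Here I use the pathwise reading of $\mathrm{REB}$, or equivalently replace norms by their expectations at the end.

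Dividing everything by $\|\mathbf{y}_{est}(\tilde{\mathbf{y}}_t) - \mathbf{y}\|$ then gives two pieces. The first bracket becomes the two-steps ratio $\mathcal{B}^{(2S)}(t)$. The second becomes $\|J_t\|\sqrt{\bar{\alpha}_t}\,\frac{\|\mathbf{e}^{\mathrm{clean}}_{t+1}\|}{\|\mathbf{e}^{\mathrm{clean}}_t\|}\,\mathrm{REB}(t+1) = \lambda_t\,\mathrm{REB}(t+1)$. Taking expectations over $\boldsymbol{\epsilon}$ and treating ratios of expectations as expectations of ratios, which is again part of the approximate ``$\lesssim$'', yields the stated recursion. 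The attenuation/amplification remark then follows by unrolling the recursion: $\mathrm{REB}(t) \lesssim \sum_{s\ge t} \bigl(\prod_{r=t}^{s-1}\lambda_r\bigr)\mathcal{B}^{(2S)}(s)$.

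The main obstacle is making the alignment step honest, together with the interchange of expectation and ratio. For the appendix's full bound, I would avoid alignment and keep $\|\mathbf{e}^{\inf}_{t+1} - \mathbf{e}^{\mathrm{clean}}_{t+1}\| \le \|\mathbf{e}^{\inf}_{t+1}\| + \|\mathbf{e}^{\mathrm{clean}}_{t+1}\|$. This produces an extra term $\lambda_t$ (so the recursion becomes $\mathcal{B}^{(2S)}(t) + \lambda_t(\mathrm{REB}(t+1)+1)$). I would also use a mean-value form $\int_0^1 J_t(\cdot)\,ds$ in place of the pointwise Jacobian, so that the bound is rigorous rather than first-order.
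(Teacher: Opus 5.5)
Your argument is correct at the same heuristic level as the paper's (first-order Jacobian, pathwise ratios). It uses the same two ingredients: the shared-noise cancellation $\hat{\mathbf{y}}_t-\hat{\mathbf{y}}_t^{(2S)}=\sqrt{\bar{\alpha}_t}\,\bigl(\mathbf{y}_{est}(\hat{\mathbf{y}}_{t+1})-\mathbf{y}_{est}(\tilde{\mathbf{y}}_{t+1})\bigr)$ and a Jacobian bound at $\hat{\mathbf{y}}_t^{(2S)}$. Where you differ is the anchor of the triangle inequality.

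The paper anchors at $\mathbf{y}_{est}(\tilde{\mathbf{y}}_t)$. It introduces alternative biases $\mathcal{B}^{(2S)}_{\mathrm{alt}}$ and $\mathrm{REB}_{\mathrm{alt}}$ that measure distances between estimates, and proves $\mathrm{REB}_{\mathrm{alt}}(t)\lesssim\mathcal{B}^{(2S)}_{\mathrm{alt}}(t)+\lambda_t\,\mathrm{REB}_{\mathrm{alt}}(t+1)$. Only then does it convert back through $\mathrm{REB}(t)\le 1+\mathrm{REB}_{\mathrm{alt}}(t)$ and the law of cosines. So alignment is invoked for both cosines $\rho_1,\rho_2$, and a leading $1$ has to be reabsorbed.

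You anchor at $\mathbf{y}$, so the local term is exactly $\mathcal{B}^{(2S)}(t)$ with no alignment needed. The hypothesis then enters only for the propagated difference at step $t+1$. Because $\mathcal{B}^{(2S)}(t)\le 1+\mathcal{B}^{(2S)}_{\mathrm{alt}}(t)$ always holds, your intermediate bound $\mathrm{REB}(t)\lesssim\mathcal{B}^{(2S)}(t)+\lambda_t\,\mathrm{REB}_{\mathrm{alt}}(t+1)$ is never weaker than the paper's. Applying the law of cosines to the last factor alone gives $\mathcal{B}^{(2S)}(t)+\lambda_t\sqrt{(\mathrm{REB}(t+1)-\rho_2)^2+1-\rho_2^2}$, which is at least as tight as the paper's full bound and free of $\rho_1$. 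Your alignment-free variant $\mathcal{B}^{(2S)}(t)+\lambda_t\bigl(\mathrm{REB}(t+1)+1\bigr)$ is simpler, but it discards the angle information that lets the full bound collapse to the stated one as $\rho_2\to 1$.

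Two small points. First, under alignment the difference has norm $\bigl|\|\mathbf{y}_{est}(\hat{\mathbf{y}}_{t+1})-\mathbf{y}\|-\|\mathbf{y}_{est}(\tilde{\mathbf{y}}_{t+1})-\mathbf{y}\|\bigr|$, with an absolute value. Bounding it by $\mathrm{REB}(t+1)\,\|\mathbf{y}_{est}(\tilde{\mathbf{y}}_{t+1})-\mathbf{y}\|$ therefore requires $\mathrm{REB}(t+1)\ge 1/2$. This holds in the exposure-bias regime, and the paper's own simplification needs the same, but you should state it. Second, your unrolled sum should stop at $T-1$ and carry a boundary term $\bigl(\prod_{r=t}^{T-1}\lambda_r\bigr)\mathrm{REB}(T)$, since the two-step bias is undefined at the top level.
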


The proof is given in Appendix~\ref{app:proof_renoising}, along with a visualization of the impact of the two-steps bias on the total REB for our practical datasets. This motivates enforcing $\mathcal{B}^{(2S)}(t) \leq \tau$ at each timestep as a practical proxy for controlling the full REB. We further decompose the two-steps bias into two distinct contributions:
\begin{align}
    \mathcal{B}^{(2S)}(t) = \mathcal{B}^{(own)}(t) + \delta(t), \label{decomposition}
\end{align}

\textbf{1. The Own-Prediction Bias} $\mathcal{B}^{(own)}$ measures the error amplification when the model is fed its own prediction instead of the ground-truth noisy input:
    \begin{equation}
        \hat{\mathbf{y}}_{t}^{(own)} \triangleq \sqrt{\bar{\alpha}_{t}}\,\mathbf{y}_{est}(\tilde{\mathbf{y}}_{t}) + \sigma_{t}\,\boldsymbol{\epsilon}, \label{eq:y_own} 
        \end{equation}
    \vspace{-5mm}
   \begin{align}
        \mathcal{B}^{(\mathrm{own})}(t) \triangleq \frac{\mathbb{E}_{\boldsymbol{\epsilon}}\|\mathbf{y}_{est}(\hat{\mathbf{y}}_{t}^{(own)}) - \mathbf{y}\|}{\mathcal{E}^{\mathrm{clean}}(t)}
    \end{align}
A model is \emph{stable} at a step $t$ when $\mathcal{B}^{(own)}(t) \approx 1$, meaning that repeated denoise-renoise iterations introduce no drift from the noised ground-truth state prediction. Conversely, $\mathcal{B}^{(own)}(t) \gg 1$ signals instability.

\textbf{2. The Propagation Residual} $\delta(t) > 0$ denotes the residual error due to additional error inherited from the step $t+1$.

Minimizing $\mathcal{B}^{(own)}$ is a necessary condition for stability. The following result establishes that own-prediction bias directly depends on clean-input error, and that for each noise level there exists a natural stability boundary.

\begin{proposition}[\textbf{Stability Threshold}]
\label{prop:instability_threshold}
    Assuming \textbf{(A1)} Wiener denoiser structure and $\textbf{(A2)}$ Spectral bias of the neural denoiser (detailed in Appendix~\ref{app:proof_instability}), $\mathcal{B}^{(own)}_\theta(t)$ is an increasing function of $\mathcal{E}^{\text{clean}}_{\theta}(t)$. In particular, at each noise level $\sigma_t$ and for any threshold $\tau \geq 1$, there exists a critical clean-input error $\gamma(\sigma_t, \tau)$, increasing in $\sigma_t$ and decreasing in $\tau$, such that if $\mathcal{E}^{\text{clean}}_{\theta}(\sigma_t) \leq \gamma(\sigma_t, \tau)$ then $\mathcal{B}^{(own)}_\theta(t) \leq \tau$.
\end{proposition}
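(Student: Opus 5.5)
We work in a Fourier (or eigen-) basis in which the assumptions are stated, and reduce $\mathcal{B}^{(own)}_\theta(t)$ to an explicit function of $\mathcal{E}^{\text{clean}}_\theta(t)$.

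\textbf{Modelling the denoiser.} Under (A1), we model the denoiser's clean estimate as a linear Wiener-type filter acting mode by mode:
\[
\mathbf{y}_{est}(\mathbf{y}_t)_k = \frac{1}{\sqrt{\bar{\alpha}_t}}\, h_k\, (\mathbf{y}_t)_k ,
\]
plus a conditional-mean contribution from $\mathbf{x}$. Here $h_k\in[0,1]$ is the effective gain on mode $k$. The optimal Wiener gain is $h_k^\star=\bar{\alpha}_t s_k/(\bar{\alpha}_t s_k+\sigma_t^2)$, where $s_k$ is the residual signal variance not explained by $\mathbf{x}$.

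Under (A2), spectral bias, the learned gains deviate from $h_k^\star$ mostly at high frequencies. We parametrize this deviation by a single scalar, for instance a cutoff or attenuation level $\eta$, so that the clean-input error is a monotone function of $\eta$. The clean-input error then splits, per mode, into a residual-signal term $(1-h_k)^2 s_k$ and a noise-leakage term $h_k^2\sigma_t^2/\bar{\alpha}_t$. We identify $\mathcal{E}^{\text{clean}}_\theta(t)^2$ with the sum over modes of these terms, working with squared norms and invoking Jensen or concentration to pass to $\mathbb{E}\|\cdot\|$.

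\textbf{Computing the own-prediction error.} Next we compute the error of $\mathbf{y}_{est}(\hat{\mathbf{y}}^{(own)}_t)$. Since $\hat{\mathbf{y}}^{(own)}_t=\sqrt{\bar{\alpha}_t}\,\mathbf{y}_{est}(\tilde{\mathbf{y}}_t)+\sigma_t\boldsymbol{\epsilon}'$ with fresh noise $\boldsymbol{\epsilon}'$, the second application of the filter gives per mode
\[
h_k^2(\text{signal}) + h_k^2(\text{old noise}) + h_k(\text{new noise}).
\]
The squared error of this is a quadratic form in $(1-h_k)$ and $h_k$, and it decomposes as
\[
\mathcal{E}^{\text{clean}}(t)^2 + \Delta_k, \qquad \Delta_k = \big(h_k(1-h_k)\big)^2 s_k + h_k^2\frac{\sigma_t^2}{\bar{\alpha}_t} + \text{cross terms}.
\]
Dividing by $\mathcal{E}^{\text{clean}}(t)^2$ yields $\mathcal{B}^{(own)}_\theta(t)^2 = 1+R(\eta,\sigma_t)$. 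We then show that $R$ is increasing in $\eta$, and hence in $\mathcal{E}^{\text{clean}}_\theta(t)$, because a worse filter leaves more uncorrected residual, which is re-injected and amplified by the second pass. This gives the first claim.

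\textbf{Defining the threshold.} Monotonicity lets us define
\[
\gamma(\sigma_t,\tau)=\sup\{\mathcal{E}^{\text{clean}} : \mathcal{B}^{(own)}\le\tau\}.
\]
It is decreasing in $\tau$ by definition, whatever the monotonicity in $\mathcal{E}$, since enlarging $\tau$ enlarges the feasible set. Because $\mathcal{B}^{(own)}$ is increasing in $\mathcal{E}^{\text{clean}}$, the implication $\mathcal{E}^{\text{clean}}\le\gamma\Rightarrow\mathcal{B}^{(own)}\le\tau$ follows. Note that the natural direction in $\tau$ is for $\gamma$ to \emph{increase} as $\tau$ is relaxed; we would state whichever direction the paper's notation intends and check that it matches this monotone-inverse construction.

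\textbf{Main obstacle: monotonicity in $\sigma_t$.} We must show $\partial\gamma/\partial\sigma_t>0$, meaning that at larger noise a larger clean error is tolerable. The intuition is that the fresh noise $\sigma_t\boldsymbol{\epsilon}'$ dominates the re-noised input, so inherited error is relatively diluted. To formalize this, we would evaluate the ratio $R$ at a fixed absolute error level and show that it decreases in $\sigma_t$: the inherited error enters scaled by $\sqrt{\bar{\alpha}_t}$, while the baseline noise contribution grows like $\sigma_t^2$. This should follow from the explicit Wiener formula, perhaps under a simplifying isotropic or single-mode reduction of (A2), with the implicit function theorem then giving the sign of the derivative.
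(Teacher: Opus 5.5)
There is a genuine gap at the step you wave through: ``$R$ is increasing in $\eta$ because a worse filter leaves more uncorrected residual, which is re-injected and amplified by the second pass.'' In your model (free gains $h_k$, with spectral bias as high-frequency attenuation) this step is false.

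The paper's $\hat{\mathbf{y}}^{(own)}_t$ re-uses the noise draw of $\tilde{\mathbf{y}}_t$. So the input perturbation is exactly $\sqrt{\bar\alpha_t}\,\boldsymbol{\varepsilon}$, with $\boldsymbol{\varepsilon}=\mathbf{y}_{est}(\tilde{\mathbf{y}}_t)-\mathbf{y}$. A linear filter then gives the per-mode residual $(1+h_k)\hat\varepsilon_k$ exactly, so $\bigl(\mathcal{B}^{(own)}\bigr)^2\approx\sum_k(1+h_k)^2e_k/\sum_k e_k$ with $e_k=(1-h_k)^2s_k+h_k^2\sigma_t^2/\bar\alpha_t$. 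Lowering $h_k$ below $h_k^\star$ raises $e_k$ but shrinks the factor $1+h_k$. Along $h_k=(1-\eta)h_k^\star$ the clean error increases with $\eta$. Yet the bias goes from a value above $1$ at $\eta=0$ to exactly $1$ at $\eta=1$, where the model ignores its noisy input. The second pass applies the same attenuated gain, so the residual is damped, not amplified.

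Your fresh draw $\boldsymbol{\epsilon}'$ does not rescue this. For one mode, in your squared-norm convention, $\bigl(\mathcal{B}^{(own)}\bigr)^2=\frac{(1-h^2)^2s+(h^4+h^2)\sigma^2/\bar\alpha}{(1-h)^2s+h^2\sigma^2/\bar\alpha}$. This equals $1$ at $h=0$ and $1+\rho(2+\rho)/(1+\rho)^2$ at $h^\star$, where $\rho=\bar\alpha s/\sigma^2$. It also keeps even the Bayes-optimal filter away from $1$ (tending to $\sqrt{2}$ at high SNR), so it changes the quantity being bounded.

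The missing idea is the coupling that the paper's (A1) postulates. $J_\theta$ is Fourier-diagonal with eigenvalue $\lambda_k=(\sqrt{\bar\alpha}/\sigma^2)E_\theta(k)$: this is the Tweedie identity (Jacobian of the posterior mean $=\sqrt{\bar\alpha}/\sigma^2$ times posterior covariance) imposed on the learned model. Under it, the sensitivity at mode $k$ grows with the model's own residual at that mode. A linear filter satisfies this only if $h_k\in\{h_k^\star,1\}$ for every $k$, so it cannot be obtained by perturbing gains. With this coupling, linearization and Parseval give
\[
\bigl(\mathcal{B}^{(own)}_\theta\bigr)^2\approx 1+\frac{2\bar\alpha}{\sigma^2}R(\theta)+\Bigl(\frac{\bar\alpha}{\sigma^2}\Bigr)^2\frac{\sum_k E_\theta(k)^3}{\sum_k E_\theta(k)},\qquad R(\theta)=\frac{\sum_k E_\theta(k)^2}{\sum_k E_\theta(k)}.
\]
(A2) then becomes a comparison of error spectra rather than an attenuation parameter. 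If $E_A(k)=r_kE_B(k)$ with $r_k\in[0,1]$ non-decreasing and $E_B$ non-increasing in $k$, then $r_k^2\le r_k$ together with Chebyshev's sum inequality under weights $E_B(k)$ gives $R(\theta_A)\le R(\theta_B)$, and similarly for the cubic term. A one-parameter reduction works only if you parametrize the spectra $E_\eta(k)$ along this order, not the gains.

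Once this formula is in place, the prefactor $\bar\alpha_t/\sigma_t^2$, which decreases in $\sigma_t$, is what lets $\gamma$ grow with $\sigma_t$ at a fixed error spectrum, as you intuited. Finally, your sup construction makes $\gamma$ non-decreasing in $\tau$. Your sentence ``decreasing in $\tau$ by definition, since enlarging $\tau$ enlarges the feasible set'' therefore contradicts itself. The ``decreasing'' in the statement appears to be a slip, and the paper does not derive that direction either.
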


\begin{figure}[t]
    \centering
    \includegraphics[width=\linewidth]{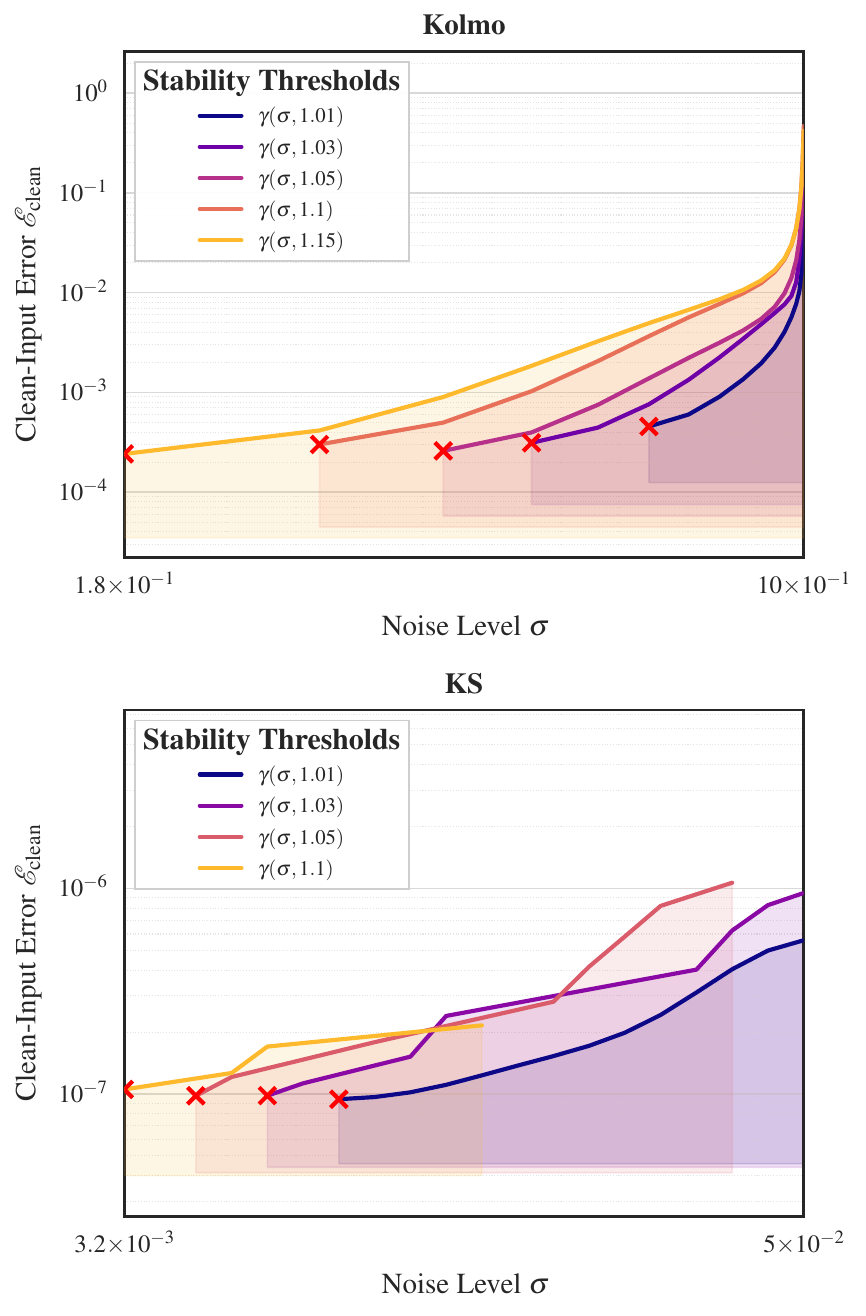}
    \caption{\textbf{Stability Thresholds.} We train a diffusion model on a log-uniform schedule. For a given training checkpoint, if the own-prediction bias at a given noise level falls in the $\tau\pm 0.05$ box, where $\tau \in [1.01,..,1.15]$, we report its clean-input error. One sees that Clean-Input Error and Own-Prediction Bias correlate on both datasets. The red crosses correspond to the smallest noise-level for which the model achieves $\mathbb{E}_{\mathbf{x}, \mathbf{y}}\mathcal{B}^{(own)} \leq \tau$ at convergence, indicating that there is a -- data and model-dependent -- minimal noise-level where the model can be bias-free.}
    \label{fig:instability_thresholds}
    \vspace{-5mm}
\end{figure}

The proof is given in Appendix~\ref{app:proof_instability}. In other words, at each noise level, there is a minimal clean-input error to be achieved in order for the model to be stable. $(A1)$ does not hold exactly in practice, and $(A2)$ may not be true for all classes of neural networks, however we display the empirical correlation between the clean-input error and the own-prediction bias for a diffusion model in Figure~\ref{fig:instability_thresholds}, on different benchmark datasets (see Section \ref{experimental-setup} for setup).

\subsection{Schedules control Clean-Input Error decrease rate}

In the rest of this work, we will restrict the schedule optimization problem to schedules that are stable. Given a small $\tau$, we therefore restrict (\ref{eq:min_error}) to identifying the \textit{Optimal Stable Schedule}:
\begin{align}\label{bias-free}
    \mathcal{S}^* &= \operatorname*{argmin}_{\{\mathcal{S}\}} \mathbb{E}_{\mathbf{x}, \mathbf{y}} \big [ \mathcal{E}_{\theta^*(\mathcal{S})}^{\text{clean}}\left(0\right) \big ] \\ \text{s.t.} \quad &\forall t < T, \quad \mathbb{E}_{\mathbf{x}, \mathbf{y}} \big[ \mathcal{B}_{\theta^*(\mathcal{S})}^{(2S)}(t) \big ]\leq \tau, \nonumber
\end{align}
The reason is that a model with large REB drifts outside of distribution, leading to unpredictable behaviour and making sub-optimal allocation of model capacity (reducing clean-input error on noise-levels where it cannot be reduced during inference). We therefore expect that the true optimal schedule falls in this category.

As illustrated in Figure \ref{fig:schedule_comparison}, the schedule governs the decay rate of the reconstruction error, directly influencing the final prediction. In the absence of any stability constraint, one could simply set $\sigma_0 \to 0$,  making the final denoising task trivially easy ($\mathcal{E}^{\text{clean}}(0) \to 0$). However, this degenerate schedule doesn't take into account that a larger input error is propagated during inference compared to training. Consequently, $\hat{\mathbf{y}}_0$ diverges from $\tilde{\mathbf{y}}_0$, leading to high REB. Using the stability threshold, we can formalize this:

\begin{proposition}[\textbf{Slow Error Decrease Principle}]
\label{prop:slow_decrease}
Under the stability condition of Proposition~\ref{prop:instability_threshold}, a finite capacity assumption \textbf{(A1)}, and assuming $\mathcal{E}^{\text{clean}}_\theta(\sigma)$ is strictly decreasing in $\sigma$ \textbf{(A2)}, the schedule minimizing (\ref{bias-free}) contains no unnecessary noise levels: every intermediate $\sigma_k$ is necessary in the sense that removing it would violate the bias constraint, i.e. lead to $\mathcal{B}^{(2S)}(k-1) > \tau$. In particular, the greedy construction that always takes the largest feasible jump is optimal.
\end{proposition}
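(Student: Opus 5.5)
We first pin down the schedule problem. A schedule is a finite decreasing sequence $\sigma_T > \dots > \sigma_1 > \sigma_0$, with $\sigma_T$ fixed at the pure-noise level. The objective in (\ref{bias-free}) is $\mathcal{E}^{\text{clean}}_\theta(\sigma_0)$. By \textbf{(A2)} this is strictly decreasing in $\sigma_0$, so minimizing the objective is the same as pushing $\sigma_0$ as low as possible.

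Next we model the bias constraint as a feasibility relation between consecutive levels. The two-steps state $\hat{\mathbf{y}}^{(2S)}_{k-1}$ in (\ref{eq:y_2S}) depends only on $\mathbf{y}_{est}$ at $\sigma_k$ and on the target level $\sigma_{k-1}$. Hence $\mathcal{B}^{(2S)}(k-1)$ is a function $b(\sigma_k,\sigma_{k-1})$ of the pair only. We then use the decomposition (\ref{decomposition}) and Proposition~\ref{prop:instability_threshold} to argue a monotonicity property: for fixed $\sigma_k$, $b$ grows as $\sigma_{k-1}$ decreases. The reason is that the inherited error $\mathcal{E}^{\text{clean}}(\sigma_k)$ is fixed while the threshold $\gamma(\sigma_{k-1},\tau)$ shrinks. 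Consequently the feasible set of next levels is an interval $[\phi(\sigma_k),\sigma_k)$, where $\phi(\sigma)$ is the largest feasible jump. A second monotonicity property is also needed: $\phi$ is nondecreasing in $\sigma$. This is the main obstacle. It follows if a larger starting noise $\sigma_k$ has larger $\mathcal{E}^{\text{clean}}(\sigma_k)$ (by \textbf{(A2)}), and hence a larger propagation residual $\delta$. Lower starting noise then permits reaching lower levels. We would state this as an explicit regularity consequence of the two propositions, assuming $\delta$ is increasing in the inherited error.

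The optimality argument then has two parts. For the necessity claim, suppose some intermediate $\sigma_k$ could be removed while keeping $b(\sigma_{k+1},\sigma_{k-1})\le\tau$. The finite capacity assumption \textbf{(A1)} means the model's capacity is spread across the levels it trains on, so dropping a level does not increase the clean-input errors at the remaining ones. The schedule with $\sigma_k$ removed is therefore still feasible. We then use its freed capacity, or equivalently one extra greedy step, to lower $\sigma_0$ strictly below its current value, using continuity of $b$ and \textbf{(A2)}. This contradicts minimality.

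For the greedy claim, let the greedy schedule be $g_{j+1}=\phi(g_j)$ starting from $g_0=\sigma_T$, and let $s_j$ be any feasible schedule listed from the top, with the same number of steps $T$. We show by induction that $g_j\le s_j$ for all $j$. The base case $g_0=s_0$ holds trivially. For the inductive step, feasibility gives $s_{j+1}\ge\phi(s_j)$, and monotonicity of $\phi$ with $g_j\le s_j$ gives $\phi(s_j)\ge\phi(g_j)=g_{j+1}$. Hence $g_{j+1}\le s_{j+1}$. In particular the greedy final level $g_T$ is at most $\sigma_0$ of any feasible schedule, so by \textbf{(A2)} greedy minimizes $\mathcal{E}^{\text{clean}}(0)$. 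The one subtle point we would flag is that $\theta^*(\mathcal{S})$ depends on the schedule. We would handle this by treating the model as fixed, following the paper's convention of fixing a trained model, and invoking \textbf{(A1)} only to justify that removing levels cannot hurt.
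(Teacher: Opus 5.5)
There is a genuine gap, and it lies in how you remove the dependence of the model on the schedule. In (\ref{bias-free}) the objective is $\mathcal{E}^{\text{clean}}_{\theta^*(\mathcal{S})}(0)$ and the number of steps is free. The paper explicitly reintroduces $\theta$ from Proposition~\ref{prop:instability_threshold} onward because model comparisons are needed, so the fixed-model convention you invoke is exactly what this proposition drops.

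With the model fixed, the objective depends on $\sigma_0$ alone and a redundant level costs nothing. If $\mathcal{S}$ is optimal and some level can be inserted while keeping feasibility, the enlarged schedule is also optimal and the inserted level is unnecessary. So in that setting the necessity claim is not just unproved but generally false. You recover a penalty only by imposing a fixed budget $T$, which (\ref{bias-free}) does not contain; the paper stresses that $T$ emerges from the construction. In that surrogate problem (A1) does no work, since feasibility after deleting $\sigma_k$ is immediate.

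Your phrase ``freed capacity, or equivalently one extra greedy step'' conflates two mechanisms. The extra step keeps the model fixed and lowers $\sigma_0$. That requires a feasible jump strictly below $\sigma_0$, which fails in the regime the result is used for, where $\sigma_0=\min\boldsymbol{\sigma}^{\text{solved}}$ is already the lowest stable level. The missing idea is the paper's. Delete the unnecessary $\sigma_k$, retrain on the shorter schedule, and use (A1) to reallocate the freed capacity to the \emph{same} $\sigma_0$. By (A2) this strictly lowers $\mathcal{E}^{\text{clean}}_\theta(\sigma_0)$, contradicting optimality. What you need from (A1) is this strict improvement on removal, not your ``removal does not hurt'' reading. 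Note also that you use (A2) in its intended sense (lower noise, lower error), which is opposite to the literal ``decreasing in $\sigma$''.

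Your stays-ahead induction is clean, but it certifies a different construction: top-down from $\sigma_T$ with $T$ fixed, maximizing depth. The greedy in the proposition (Phase~2) starts at $\sigma_0=\min\boldsymbol{\sigma}^{\text{solved}}$ and takes the largest upward jump $\sigma_{t+1}=\max\{\sigma' : b(\sigma',\sigma_t)\le\tau\}$. The dual of your induction covers it under your own monotonicity assumptions. Let $\psi(\sigma)=\max\{\sigma' : b(\sigma',\sigma)\le\tau\}$ be nondecreasing. Then every feasible schedule from $\sigma_0$ has $s_{j+1}\le\psi(s_j)$, so the greedy stays above it and reaches $\sigma_T$ in the fewest steps. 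The capacity argument above then converts ``fewest levels'' into ``lowest $\mathcal{E}^{\text{clean}}(\sigma_0)$''.

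Finally, monotonicity of $b$ in its second argument and monotonicity of $\phi$ are additional hypotheses, not consequences of Proposition~\ref{prop:instability_threshold}. The threshold $\gamma(\sigma_{k-1},\tau)$ bounds $\mathcal{E}^{\text{clean}}(\sigma_{k-1})$ for the own-prediction bias, not the error inherited from $\sigma_k$. Making them explicit is reasonable, since the paper's appeal to $\gamma$ increasing in $\sigma$ tacitly uses the same regularity, but state them as assumptions.
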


This follows from the observation that any unnecessary intermediate step represents wasted capacity that could be reallocated to reduce $\mathcal{E}^{\text{clean}}_\theta(0)$ (a formal proof is given in Appendix~\ref{app:proof_slow_decrease}). Note that the bias constraint need not be tight at each step — it suffices that no larger jump is feasible. Since satisfying the bias constraint is increasingly easy at higher noise levels (as $\gamma(\sigma, \tau)$ increases with $\sigma$), the schedule is naturally coarse at high noise and fine near $\sigma_0$. This motivates reformulating (\ref{bias-free}) as minimizing the final noise level $\sigma_{0}$ subject to the bias constraint:
\vspace{-2mm}
\begin{align}
    \min \sigma_0 \quad \text{s.t.} \quad \forall t < T, \quad \;\mathbb{E}_{\mathbf{x}, \mathbf{y}} \; \mathcal{B}_\theta^{(2S)}(\mathbf{x}, \mathbf{y}, t) \leq \tau
\end{align}

\begin{figure}
    \includegraphics[width=\linewidth, trim=0 8pt 0 0pt, clip, page=1]{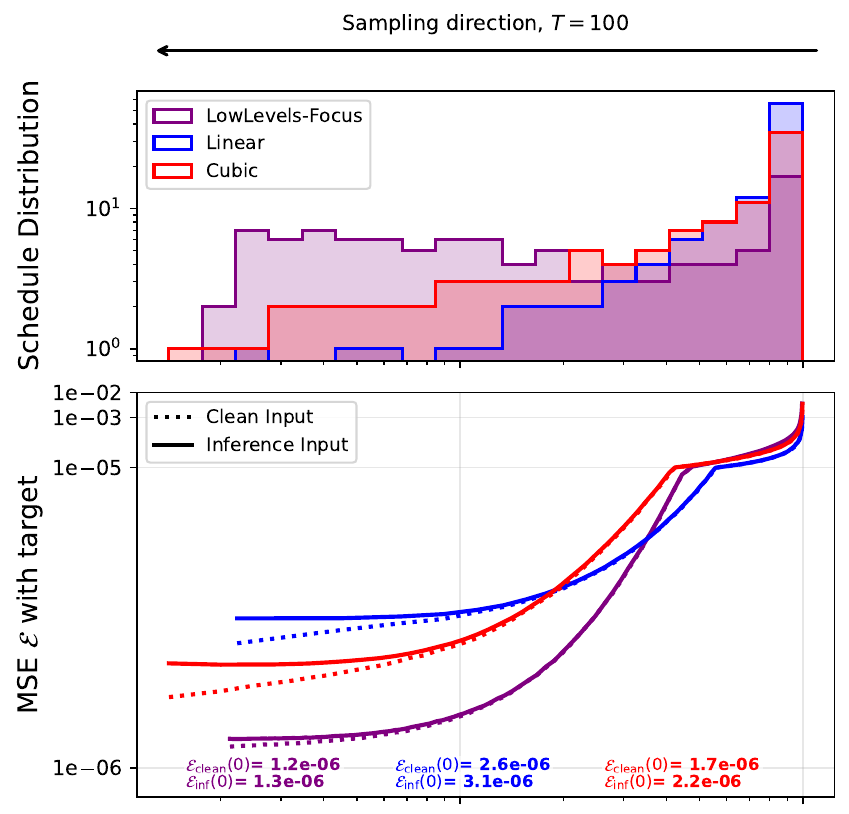}
    \vspace{-0.1mm}
    \includegraphics[width=\linewidth, trim=0 8pt 0 0pt, clip, page=2]{schedule_comparisons_2.pdf}
    \caption{\textbf{Effect of schedule on final reconstruction error}: Clean and Inference Input Errors during training. The diffusion sampling direction goes from right (high noise-levels) to left (low noise-levels). Schedules with more weight on low noise-levels obtain a better reconstruction performance early in the denoising process. However they are then unable to reduce the error further compared to schedules that put more focus on low noise-levels. At the end of sampling, schedules with higher clean-input reconstruction error start suffering from REB.}
    \label{fig:schedule_comparison}
    \vspace{-8mm}
\end{figure}

\section{Bias-Constrained Schedule Construction via Error Constraints}

Based on decomposition (\ref{decomposition}), a stable schedule must ensure that the sum of the own-prediction bias $\mathcal{B}^{(own)}$ and the propagation bias $\delta$ remain bounded, yielding the following principles:
\begin{itemize}
    \item At every $\sigma_t$ in $\mathcal{S}$, clean-input error must be reduced enough to reach stability, i.e.\ $\mathcal{B}^{(own)}(t) \leq \tau$.
    \item The step size $\sigma_{t+1} - \sigma_{t}$ must be constrained to limit the propagation bias such that ${B}^{(2S)}(t) \leq \tau$.
    \item \textbf{Efficient Capacity Allocation:} Driving the two-steps bias well below the stability boundary for a given noise-level is wasteful. Capacity should be conserved for lower noise levels.
\end{itemize}

\subsection{Adaptive Scheduling Algorithm}

Our algorithm contains two stages : an exploration stage, where the stability thresholds are discovered for the task at hand; and a schedule construction phase, where we greedily pick the noise levels such that the 2-steps bias remains bounded by $\tau$ at each step. While the ideal $\tau$ is 1, a value of $\tau$ slightly above 1 allows more flexibility in constructing the schedule and is shown experimentally to still lead to no error accumulation, potentially due to error correction at next step.

\paragraph{Phase 1: Exploration of Stability Thresholds.}
We initialize a dense log-uniform exploration schedule $\boldsymbol{\sigma}^{\text{exp}} = \{\sigma^{(1)}, \ldots, \sigma^{(N)}\}$ covering $[\sigma_{\min}, \sigma_T]$ with uniform weight. We train the model on this schedule, periodically evaluating $\mathcal{B}^{(\text{own})}(i)$ for all active levels. Any level satisfying $\mathcal{B}^{(\text{own})}(i) \leq \tau$ is \emph{solved}: we save the corresponding checkpoint $\theta^*(\sigma^{(i)})$ and remove $\sigma^{(i)}$ from the active schedule. Because solved levels are progressively removed, the active schedule shrinks continuously, making later epochs strictly cheaper than earlier ones. The exploration terminates once no new level is solved in a pass, ensuring the total epoch count does not exceed $E$ (the number of epochs required to train a single baseline model to convergence). This yields a set of checkpoints $\{\theta^*(\sigma)\}_{\sigma \in \boldsymbol{\sigma}^{\text{solved}}}$.

\paragraph{Phase 2: Bias-Constrained Schedule Construction.}
Given the solved checkpoints, we construct the final schedule greedily, by setting $\sigma_0 = \min \boldsymbol{\sigma}^{\text{solved}}$ and proceeding forward:
\begin{align}
    \sigma_{t+1} = \max \left\{ \sigma' \in \boldsymbol{\sigma}^{\text{solved}} : \mathcal{B}^{(2S)}_{\theta^*(\sigma'), \theta^*(\sigma_t)}(\sigma_t, \sigma') \leq \tau \right\},
\end{align}
where $\mathcal{B}^{(2S)}_{\theta^*(\sigma'), \theta^*(\sigma_t)}(\sigma_t, \sigma')$ denotes the two-steps bias evaluated using checkpoint $\theta^*(\sigma')$ for the first denoising step and $\theta^*(\sigma_t)$ for the second. At each step we take the largest feasible jump, directly implementing the slow decrease principle (Proposition~\ref{prop:slow_decrease}). This ensures (1) a schedule with the minimal number of diffusion steps and (2) no wasted model capacity on unnecessary intermediate steps. The number of steps $T$ emerges naturally from the construction. This phase requires only forward passes to evaluate two-steps biases between checkpoint pairs  (no gradient computation) and is therefore computationally negligible. The algorithm pseudo-code can be found in Appendix~\ref{implementation}. The model is trained from scratch on the constructed schedule using the standard diffusion loss~\eqref{eq:l_diff}.

\section{Proxy Unrolled Training}

Training neural emulators via \textit{Teacher Forcing} (TF) is notorious for leading to \textit{Simulation Exposure Bias} \cite{schmidt2019generalization, brandstetter2022message, chen2024diffusion}: during inference, the model must condition on its own past predictions $\hat{\mathbf{x}}^k$ rather than the ground-truth used in training. Small errors in $\hat{\mathbf{x}}^k$ shift the input distribution for subsequent steps, causing errors to accumulate and trajectories to diverge.

For fast-inference models, this limitation is typically mitigated via \textit{Unrolled Training} (UT) \cite{list2022learned}, where the model is optimized over a horizon of $U$ autoregressive steps. In the context of diffusion models, a naive unrolling takes the form:
\begin{align}
    \mathcal{L}_{\text{UT}}(\theta) = \mathbb{E}_t \sum_{u=1}^{U} \mathcal{L}_{\text{diff}} \left( \mathbf{x}^{k+u}, \mathcal{M}_\theta^{\circ u}\left(\mathbf{x}^k\right) \right).
    \label{eq:unrolled_loss}
\end{align}
However minimizing this loss is computationally prohibitive, as generating a single step $\hat{\mathbf{x}}^{k+1} = \mathcal{M}_\theta(\hat{\mathbf{x}}^k)$ necessitates executing the full iterative sampling chain (e.g., $T$ function evaluations). Therefore obtaining a simple push-forward estimate \cite{brandstetter2022message} is intractable, let alone backpropagating through the sampling chain.

\subsection{Estimating the Model's Output}

In this section, we assume that we have access to a diffusion model with low exposure-bias. We leverage this property for enabling computationally efficient unrolled training. In particular we introduce a \textit{Proxy Estimate} $\mathcal{P}^{(n)}_\theta$, which approximates the full model output using only the final $n$ denoising steps : 
\begin{align}
    \mathcal{P}^{(n)}_\theta(\mathbf{x}^k, \mathbf{x}^{k+1}) \triangleq \text{Denoise}_{n...0}\left(\tilde{\mathbf{x}}^{k+1}_n, \text{cond}=\mathbf{x}^k \right).
\end{align}
Where $\tilde{\mathbf{x}}^{k+1}_n = \sqrt{\bar{\alpha}_n} \ \mathbf{x}^{k+1} + \sqrt{1 - \bar{\alpha}_n} \ \boldsymbol{\epsilon}$. 
and $\text{Denoise}_{n...0}$ represents the sequence of $n$ denoising steps from $t=n$ down to $t=0$.

\paragraph{Faithfulness of the proxy estimate.} If the model does not suffer from exposure-bias up to step $n$, the distribution of latents obtained during sampling $\hat{\mathbf{x}}^{k+1}_n \sim p_\theta\left(\cdot \mid \mathbf{x}^{k}_n, \mathbf{z} \right)$ matches the ground-truth forward distribution $\tilde{\mathbf{x}}^{k+1}_n \sim q\left(\cdot \mid \mathbf{x}^{k+1}\right)$, and therefore the distribution of $\mathcal{P}^{(n)}_\theta$ matches that of $\mathcal{M}_\theta$. We give an experimental visualization of the accuracy of the gradients in Appendix~\ref{gradient-appendix}. We consequently define the 2-steps proxy unrolled-training loss as:
\begin{align*}
    \mathcal{L}_{\text{P-UT}}(\theta) &= \mathbb{E}_k \left[ \mathcal{L}_{\text{diff}}\!\left(\mathbf{x}^{k}, \mathbf{x}^{k+1}\right) \right] +  \\
    & \mathbb{E}_k \left[\mathcal{L}_{\text{diff}}\!\left(P_\theta(\mathbf{x}^{k}, \mathbf{x}^{k+1}), \mathbf{x}^{k+2}\right) \right]
\end{align*}

Although gradient propagation is truncated beyond the $n$ proxy steps, the use of a high-fidelity estimate provides a precise supervision signal, enabling the model to build resilience to its own prediction and correct potential artifact generation.

\section{Experimental Results}\label{experiments}

\subsection{Experimental Setup \label{experimental-setup}}

\textbf{Datasets.} We evaluate our method on three distinct fluid dynamics datasets representing different physical regimes. Visualizations are provided in Figure \ref{fig:benchmark}

\textbf{1D Kuramoto-Sivashinsky (KS)} \cite{brandstetter2022lie}: Fourth-order nonlinear PDE, which governs flame front propagation and chaotic solidification dynamics.  The scalar field $u$ evolves according to $\partial_\tau u + u\partial_x u + \partial_x^2 u + \nu \partial_x^4 u = 0$. Numerical integration is performed on a periodic domain with 256 spatial points and a timestep $\Delta\tau = 0.2$. We use 512 training trajectories of length $140\Delta\tau$ and 64 validation/testing trajectories of length $640\Delta\tau$. The models are trained with a step-size of $4\Delta \tau$.

\textbf{2D Transonic Flow (Tra)} \cite{kohl2023benchmarking} : Simulated flow over a cylinder on a $128 \times 64$ grid. Time evolving fields are 2D velocity $u$, pressure $p$, and density $\rho$. The evaluations assess model performance over $R=60$ timesteps. 

\textbf{2D Kolmogorov Flow (Kolmo)} \cite{rozet2023score} : Incompressible fluid driven by sinusoidal forcing.  The system obeys the Navier-Stokes equations subject to the incompressibility constraint $\nabla \cdot \mathbf{u} = 0.$ 800 training trajectories of length 64, and 100 trajectories for validation/testing are simulated, with a spatial resolution of $64 \times 64$ and $\Delta\tau = 0.2$.

\begin{figure*}
    \centering
    \includegraphics[width=\linewidth]{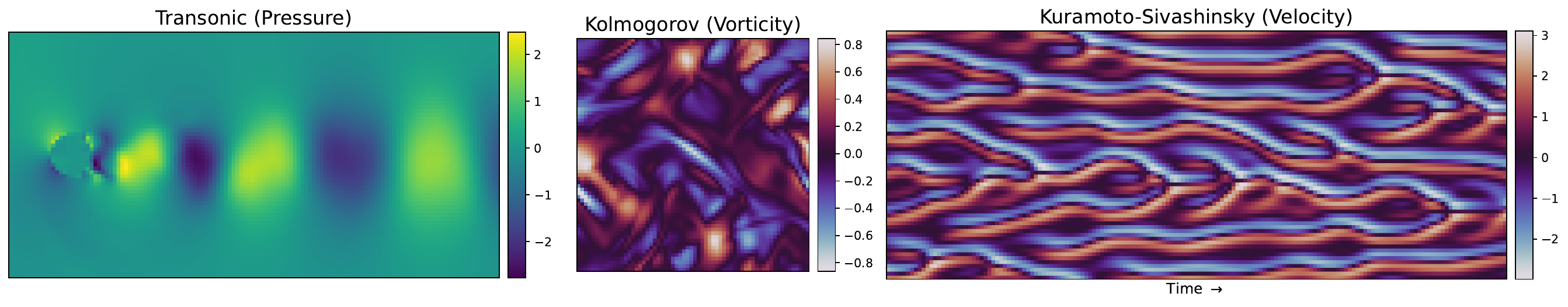}
    \caption{Samples from benchmark datasets}
    \label{fig:benchmark}
\end{figure*}

\textbf{Baselines.}
We train standard diffusion schedule baselines (Linear, Cosine, Sigmoid) using Teacher-Forcing loss. We further train deterministic U-Net baselines with Teacher-Forcing and Unrolled-Training with $U=2$ and $U=8$ (for each dataset we report the best performing U-Net training variant). Additionaly, we train PDE-Refiner \cite{lippe2023pde}, a variant of diffusion that focuses on very low noise-levels.

\textbf{Implementation Details.}
We use the same standard U-Net architecture with attention mechanisms following \cite{kohl2023benchmarking} for all models (all diffusion models and vanilla U-Net baselines). We employ continuous time-embeddings parameterized by the Signal-to-Noise Ratio (SNR) rather than discrete timesteps. 

\textbf{Training strategy.} The procedure contains two stages:\newline \textit{(1) Pre-training:} We first run the Adaptive Schedule Procedure to obtain a bias-constrained schedule for the given task. \newline\textit{(2) Proxy Unrolled Fine-tuning:} Since diffusion exposure bias is directly correlated with model error through the instability threshold, we only use the Unrolled proxy as a fine-tuning stage. This prevents introducing noisy gradients derived from inaccurate proxy estimates during the early phases of training, and saves computation time. We set the number of proxy steps to $n=1$.
\vspace{-3mm}
\paragraph{Computational Cost \& Hyperparameters.}
Each individual training run uses the same number of epochs $E$. For our adaptive schedule, the total number of epochs is therefore $2E$ as exploration and final training both contain a single training runs. All baseline methods are also trained to convergence over $E$ epochs. For each dataset, $E$ was picked as the largest budget which leads to improvement of the 1-step MSE. Full hyper-parameter configurations and architectural details are provided in Appendix \ref{implementation}.

\textbf{Metrics.}
We report mean squared errors for the first prediction step and the short-term (10 steps) prediction. For KS and Kolmo, long-term accuracy is evaluated on the ground-truth correlation of the predicted signal and the vorticity $\omega = \partial_x u_y - \partial_y u_x$, respectively, while for Tra we compute the MSE at the last time-step. To assess physical consistency in long-horizon rollouts for the 2D case, we report the Fréchet Spectral Distance (FSD) \cite{liu2025rolling}.

\subsection{Results and Analysis}

Comparative results across the different benchmarks are displayed in Table 1.

\begin{table*}[t]
\centering
\begin{small} 
\begin{tabular}{llcccc}
    \toprule
    Dataset & Method & 1-step MSE & 10-steps MSE & \shortstack{High-Correlation Time (s) / \\ Final MSE} & \shortstack{Last-step FSD} \\
    \midrule
    \textbf{Kolmo} 
        & U-Net UT, $U=2$ & 9.53e-7 & 2.79e-5 & 10.0 (6.5, 12.4) & 9.1e-1 \\
        & Linear TF & 1.30e-6 & 3.16e-4 & 9.7 (6.0, 12.3) & 2.1e5 \\
        & Sigmoid TF & 1.25e-6 & 9.81e7 & 9.5 (4.6, 12.1) & 2.4e6 \\
        & PDERefiner \cite{lippe2023pde} & 1.12e-6 & 8.16e-5 & 10.2 (7.9, 12.2) & 1.05e1 \\
        \cmidrule(l){2-6}
        & Adaptive (Ours) & \textbf{8.12e-7} & 2.18e-4 & 10.7 (7.6, 12.6) & 3.7e5 \\
        & Adaptive + Proxy UT, $U = 1$ (Ours) & \textbf{8.07e-7} & \textbf{1.72e-5} & \textbf{11.0} (9.0, 12.6) & \textbf{3.8e-1} \\
    \midrule
    \textbf{Tra} 
        & U-Net UT, $U=1$ & 5.63e-5 & 2.0e-5 & 7.50e-1 & 2.2e5 \\
        & U-Net UT, $U=8$ & 4.15e-4 & 2.54e-3 & \textbf{1.08e-1} & 6.3e1 \\
        & Linear TF & 5.76e-5 & 9.73e-4 & 6.1e-1 & 3.54e1 \\
        & Cosine TF & 4.90e-5 & 8.79e-4 & 1.22e-1 & 2.3e1 \\
        \cmidrule(l){2-6}
        & Adaptive (Ours) & \textbf{3.87e-5} & 8.11e-4 & 1.33e-1 & 3.3e1 \\
        & Adaptive + Proxy UT, $U=1$ (Ours) & 4.05e-5 & \textbf{6.66e-4} & 1.25e-1 & 3.8e1 \\
    \midrule
    \textbf{KS} 
        & U-Net TF & 1.60e-7 & 1.50e-5 & 68.4 (50.5, 80.0) & -- \\
        & Cosine TF & 2.79e-7 & 1.43e-5 & 74.9 (48.8, 105.4) & -- \\
        & Sigmoid TF & 1.55e-7 & 1.57e-5 & 78.4 (50.4, 113.4) & -- \\
        & PDERefiner \cite{lippe2023pde} & \textbf{8.99e-8} & 1.19e-5 & \textbf{88.0} (52.5, 123.6) & \\
        \cmidrule(l){2-6}
        & Adaptive (Ours) & 9.55e-8 & \textbf{9.74e-6} & 85.8 (55.7, 117.4) & -- \\
        & Adaptive + Proxy UT, $U = 1$ (Ours) & \textbf{8.83e-8} & 1.04e-5 & 86.1 (55.0, 111.7) & -- \\
    \bottomrule
\end{tabular}
\end{small}
\label{table:results}
\vspace{2mm}
\caption{Comparison of Method Performance on Fluid Dynamics Datasets. We compare standard deterministic baselines and diffusion schedules against our Adaptive + Unrolling framework. For High-Correlation Time, the numbers in parenthesis are the high-correlation time of the worst and best 10 trajectories. The bolded numbers are the smallest value across methods in a (dataset, metric) pair. For FSD, we only consider improvements to be significant if they lead to an order of magnitude improvement. On each dataset, we report the best-performing models among baselines.
\vspace{-8mm}
}
\end{table*}
\vspace{-1mm}
\paragraph{Impact of Adaptive Schedule.}
Our experiments show that the proposed schedule optimization framework is the dominant factor driving performance gains. Regarding one-step MSE, our method consistently outperforms standard diffusion baselines by a significant margin. This improvement improves deccorelation time For Kolmogorov Flow and KS which are especially sensitive to initial error. We note that PDE-Refiner~\cite{lippe2023pde} performs competitively on KS, achieving the best 1-step MSE and high-correlation time among all methods; this is consistent with its design, which explicitly focuses capacity on low noise levels — a strategy that aligns with our Slow Error Decrease Principle. Our method achieves comparable performance on KS while also generalising to the other benchmarks where PDE-Refiner's fixed low-noise focus is less effective. \newline\textbf{Visualisations of the obtained schedules.} Figure \ref{fig:GradientSimilarity} Left plot shows the reconstruction error landscape of the obtained schedule after running adaptive training over the 3 benchmark datasets. Inference-Input error nicely follows the Clean-input error, indicating that our proxy exposure-bias metrics yield consistent REB minimization. Furthermore, the obtained $\sigma_0$ differ vastly for each dataset. Following observations made in \cite{lippe2023pde}, a low optimal $\sigma_0$ appears to be correlated with more high-frequency components in the energy spectrum in the fluid data. However, we argue that the Energy spectrum is not the only driver of $\sigma_0$. In particular, modifications in step-size naturally influence task difficulty, and could therefore lead to different errors and minimal sigma. Investigating the main drivers of noise-level landscapes is of relevant importance, furthermore it remains to be investigated whether the instability thresholds $\gamma(\sigma, \tau)$ are shared quantities across tasks.

\paragraph{Impact of Proxy Unrolled Training.}
Consistent with the hypothesis that exposure bias degrades autoregressive rollouts, Proxy Unrolled Training significantly enhances both short-term precision and long-term stability. Notably, on Kolmogorov Flow, while the one-step error is similar as Teacher Forcing, our proxy method leads to an improved long-term correlation with the ground-truth trajectory. FSD improvements demonstrate the prevention of artifact formation and maintains bounded physical consistency for long-rollouts, compared to baseline models. This validates that the proxy estimate is sufficient to sensitize the model to its own distribution shifts. We provide the temporal evolution of the Fréchet Spectral Distance (FSD) in Appendix \ref{FSD-appendix}.

\begin{figure}[H]
    \centering
    \includegraphics[width=1\linewidth]{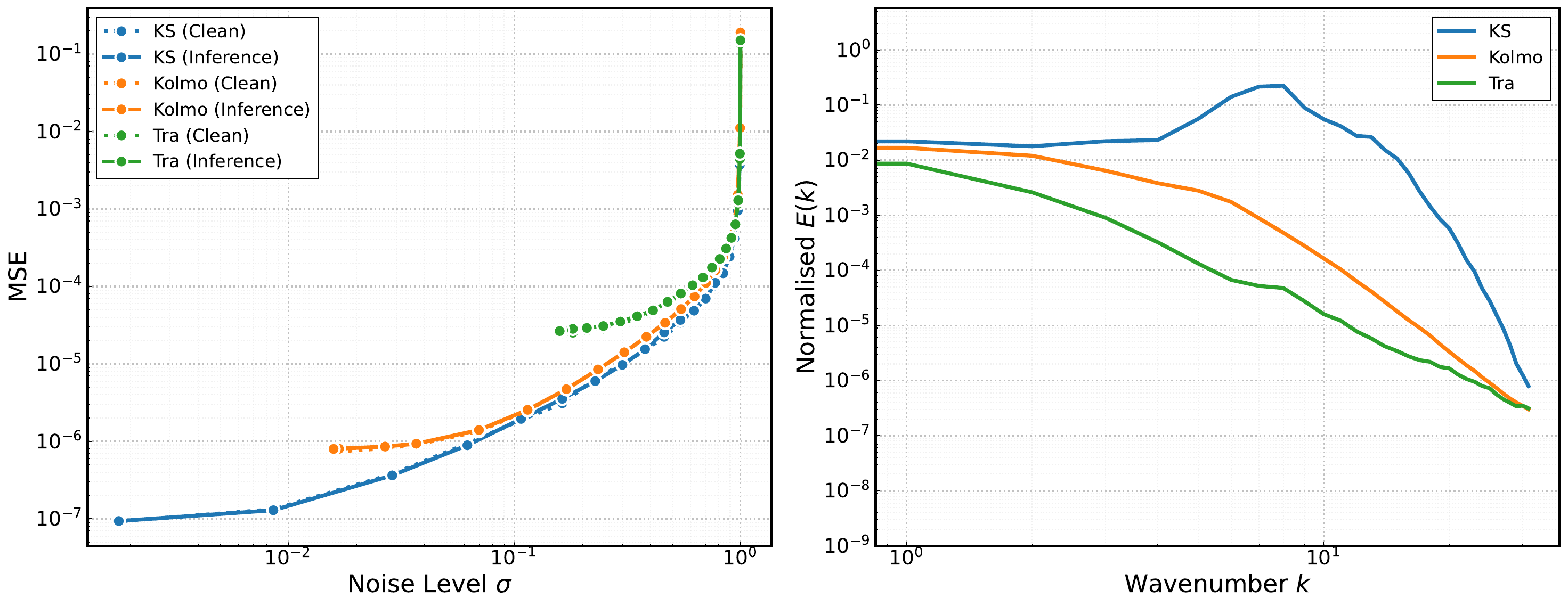}
    \caption{On the three benchmark datasets, the obtained adaptive schedules are, as shown in the left plot, exposure-bias free (inference error matches clean-input error). Interestingly, tasks which contain a higher high-frequency content tend to obtain a smaller minimal $\sigma_0$, matching the observations made in \cite{lippe2023pde}.}
    \label{fig:GradientSimilarity}
\end{figure}
\vspace{-10mm}
\section{Discussion and Future Work}

In this work, we have established an empirically grounded link between noise schedules, reconstruction error, and diffusion exposure bias for PDE diffusion models. We demonstrated that our adaptive schedule can yield near-optimal reconstruction, showing superior performance compared to both deterministic and probabilistic baselines. Furthermore, we proposed to leverage exposure-bias reduction and obtain a proxy unrolled training estimate, leading to considerable improvements in mitigating artifact formation. Beyond autoregressive simulations, our schedule holds promise for other reconstruction-bound scenarii, such as diverse inverse problems, data-assimilation or super-resolution tasks.

While our primary goal was to isolate the governing factors of sampling reconstruction error, one may be suspicious of the double training time required to idenfy a bias-constrained schedule. We claim, however, that our method can be considered a "smart hyperparameter exploration": given how large the space of schedules is, iterating over all possible schedules one by one would be much more computationally expensive. We showed that our method adapts well to different data statistics, with potential insights for larger-scale applications.

A fundamental question raised by our findings is the balance between modeling uncertainty and maintaining reconstruction accuracy. Traditional diffusion schedules in computer vision are optimized for perceptual diversity, often at the expense of strict pixel-wise accuracy \cite{blau2018perception}. Investigating the trade-off between those two objectives remains a compelling direction for future research.

Finally, whether the reconstruction error landscape is influenced by the choice of denoising prediction parameterization (e.g., $x_0$-prediction \cite{hoogeboom2023simple} and $v$-prediction \cite{salimans2022progressive}) is an important question. For instance, $x_0$-prediction typically yields smaller errors in early sampling steps, which may fundamentally alter the dynamics. Another potential direction would be to evaluate if the type of exposure bias we have shed light on, which is, different from the bias identified in \cite{ning2023elucidating}, rather bound to allocation of model capacity, is also present in unconditional diffusion models.

\newpage

\section{Impact Statement}

This paper displays methodologies for enhancing probabilistic fluid models. While this could be of societal benefit for forecasting tasks such as weather and climate, we acknowledge that high-fidelity fluid simulations are general-purpose tools that can also be applied to military contexts, such as aerodynamics for defense technologies.

\bibliography{references}

\onecolumn

\appendix

\section{Preliminaries}\label{prelim-appendix}

\subsection{Conditional Diffusion Models}
\label{sec:prelim_diffusion}
\textit{Denoising Diffusion Probabilistic Models} (DDPM) are generative models that learn to approximate a data distribution $p(\mathbf{y})$ by reversing a gradual noising process. We consider the conditional setting where the generation of a target $\mathbf{y} \in \mathbb{R}^d$ is conditioned on an input $\mathbf{x}$.

Given a fixed \textit{variance schedule} $\{\beta_t \in (0, 1)\}_{t=1}^T$, the \textit{forward process} $q(\tilde{\mathbf{y}}_{1:T} | \mathbf{y}_0)$ transforms a clean sample $\mathbf{y}_0 \sim p(\mathbf{y}|\mathbf{x})$ into Gaussian noise $\tilde{\mathbf{y}}_T \sim \mathcal{N}(\mathbf{0}, \mathbf{I})$ via a Markov chain:
\begin{align}
    q(\tilde{\mathbf{y}}_t | \tilde{\mathbf{y}}_{t-1}) &= \mathcal{N}(\tilde{\mathbf{y}}_t; \sqrt{1-\beta_t}\tilde{\mathbf{y}}_{t-1}, \beta_t \mathbf{I}).
\end{align}
A notable property of this process is that any intermediate noisy state $\tilde{\mathbf{y}}_t$ can be sampled directly from $\mathbf{y}_0$:
\begin{align}
    \tilde{\mathbf{y}}_t = \sqrt{\bar{\alpha}_t}\mathbf{y}_0 + \sqrt{1-\bar{\alpha}_t}\boldsymbol{\epsilon}, \quad \boldsymbol{\epsilon} \sim \mathcal{N}(\mathbf{0}, \mathbf{I}),
    \label{eq:forward_process}
\end{align}
where $\alpha_t \triangleq 1 - \beta_t$ and $\bar{\alpha}_t \triangleq \prod_{s=1}^t \alpha_s$. We characterize the noise intensity at step $t$ by the standard deviation $\sigma_t \triangleq \sqrt{1 - \bar{\alpha}_t}$.

The \textit{generative process} $p_\theta(\mathbf{y}_{0:T} | \mathbf{x})$ learns to reverse this corruption. A neural network $\mathbf{y}_{est}$ is trained to predict the clean signal $\mathbf{y}_0$ (or equivalently the noise $\boldsymbol{\epsilon}$) given a noisy latent $\tilde{\mathbf{y}}_t$, the conditioning $\mathbf{x}$, and the time step $t$. The model is optimized by minimizing the re-weighted squared error:
\begin{align}
    \mathcal{L}_{\text{diff}}(\theta) &= \mathbb{E}_{\mathbf{x}, \mathbf{y}_0, t, \boldsymbol{\epsilon}} \left[ \| \mathbf{y}_{est}(\tilde{\mathbf{y}}_t, \mathbf{x}, \sigma_t) - \mathbf{y}_0 \|^2 \right].
    \label{eq:diff_loss}
\end{align}

\subsection{Autoregressive Neural Emulators}
In the context of fluid dynamics, we aim to construct a surrogate model $\mathcal{M}_\theta$ that emulates the temporal evolution of a physical system. Given an initial state $\mathbf{x}^0$, the objective is to generate a trajectory $\{\hat{\mathbf{x}}^1, \dots, \hat{\mathbf{x}}^K\}$ that approximates the ground-truth sequence $\{\mathbf{x}^1, \dots, \mathbf{x}^K\}$. This is typically formulated as an autoregressive problem:
\begin{align}
    \hat{\mathbf{x}}^k = \mathcal{M}_\theta(\hat{\mathbf{x}}^{k-1}) \quad \text{for } k \in \{1, \dots, K\},
\end{align}
with $\hat{\mathbf{x}}^0 = \mathbf{x}^0$.

Standard training relies on \textit{Teacher Forcing} (TF), which minimizes the one-step prediction error conditioned on ground-truth states:
\begin{align}
    \mathcal{L}_{\text{TF}}(\theta) = \mathbb{E}_k \left[ \| \mathcal{M}_\theta(\mathbf{x}^k) - \mathbf{x}^{k+1} \|^2 \right].
\end{align}
However, TF is susceptible to \textit{exposure bias}: during inference, the model accumulates small errors, leading to a distributional shift in the input $\hat{\mathbf{x}}^k$ that diverges from the training distribution. To mitigate this, \textit{Unrolled Training} (UT) optimizes the model over a horizon of $M$ autoregressive steps:
\begin{align}
    \mathcal{L}_{\text{UT}}(\theta) &= \mathbb{E}_k \sum_{m=1}^{M} \| \hat{\mathbf{x}}^{k+m} - \mathbf{x}^{k+m} \|^2,
\end{align}
where $\hat{\mathbf{x}}^{k+m}$ is recursively generated from the model's own prior prediction $\hat{\mathbf{x}}^{k+m-1}$.

\newpage

\section{Proof of Proposition \ref{prop:renoising_attenuation}}
\label{app:proof_renoising}

\begin{proposition}[\textbf{Re-noising Attenuation}, restated]
When estimated errors are nearly aligned, the following recursive bound holds:
\begin{align*}
    \mathrm{REB}(t) \;\lesssim\; \mathcal{B}^{(2S)}(t) \;+\; \lambda_t\,\mathrm{REB}(t+1),
\end{align*}
where $\lambda_t := \|J_t\|\cdot\sqrt{\bar{\alpha}_t}\cdot\frac{\|\mathbf{y}_{est}(\tilde{\mathbf{y}}_{t+1})-\mathbf{y}\|}{\|\mathbf{y}_{est}(\tilde{\mathbf{y}}_t)-\mathbf{y}\|}$. The REB is thus driven by the local two-step bias, and attenuated (resp. amplified) across steps when $\lambda_t < 1$ (resp. $\lambda_t > 1$).
\end{proposition}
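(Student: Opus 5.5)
The plan is to compare the inference iterate $\hat{\mathbf{y}}_t$ with the two-steps state $\hat{\mathbf{y}}^{(2S)}_t$ from \eqref{eq:y_2S}, coupled through the same re-noising draw $\boldsymbol{\epsilon}$. By the simplified transition \eqref{eq:simplified_sampling},
\begin{align*}
\hat{\mathbf{y}}_t - \hat{\mathbf{y}}^{(2S)}_t = \sqrt{\bar{\alpha}_t}\,\bigl(\mathbf{y}_{est}(\hat{\mathbf{y}}_{t+1}) - \mathbf{y}_{est}(\tilde{\mathbf{y}}_{t+1})\bigr),
\end{align*}
so the noise term cancels exactly. By the triangle inequality,
\begin{align*}
\|\mathbf{y}_{est}(\hat{\mathbf{y}}_t)-\mathbf{y}\| \le \|\mathbf{y}_{est}(\hat{\mathbf{y}}^{(2S)}_t)-\mathbf{y}\| + \|\mathbf{y}_{est}(\hat{\mathbf{y}}_t)-\mathbf{y}_{est}(\hat{\mathbf{y}}^{(2S)}_t)\|.
\end{align*}
Dividing by $\mathcal{E}^{\mathrm{clean}}(t)$ and taking expectation, the first term becomes $\mathcal{B}^{(2S)}(t)$ by definition.

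For the second term I linearize $\mathbf{y}_{est}(\cdot)$ at time $t$ around $\hat{\mathbf{y}}^{(2S)}_t$. Let $J_t$ be its Jacobian, assumed roughly constant along the segment joining the two states; this is where ``$\lesssim$'' enters. The second term is then at most
\begin{align*}
\|J_t\|\,\sqrt{\bar{\alpha}_t}\,\|\mathbf{y}_{est}(\hat{\mathbf{y}}_{t+1}) - \mathbf{y}_{est}(\tilde{\mathbf{y}}_{t+1})\|.
\end{align*}
I then split this difference through $\mathbf{y}$. In general it is bounded by $\|\mathbf{y}_{est}(\hat{\mathbf{y}}_{t+1})-\mathbf{y}\| + \|\mathbf{y}_{est}(\tilde{\mathbf{y}}_{t+1})-\mathbf{y}\|$, which I will record as the full bound without the alignment assumption.

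The alignment hypothesis is used here. If the error vectors $\mathbf{y}_{est}(\hat{\mathbf{y}}_{t+1})-\mathbf{y}$ and $\mathbf{y}_{est}(\tilde{\mathbf{y}}_{t+1})-\mathbf{y}$ are nearly parallel with the same orientation, their difference has norm approximately $\|\mathbf{y}_{est}(\hat{\mathbf{y}}_{t+1})-\mathbf{y}\| - \|\mathbf{y}_{est}(\tilde{\mathbf{y}}_{t+1})-\mathbf{y}\|$. Writing the first norm as $\mathrm{REB}(t+1)$ times the clean error at $t+1$, and dropping the subtracted nonnegative term, leaves at most $\mathrm{REB}(t+1)\,\|\mathbf{y}_{est}(\tilde{\mathbf{y}}_{t+1})-\mathbf{y}\|$. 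Dividing by the clean error at $t$ gives exactly the coefficient
\begin{align*}
\lambda_t = \|J_t\|\,\sqrt{\bar{\alpha}_t}\,\frac{\|\mathbf{y}_{est}(\tilde{\mathbf{y}}_{t+1})-\mathbf{y}\|}{\|\mathbf{y}_{est}(\tilde{\mathbf{y}}_t)-\mathbf{y}\|}.
\end{align*}

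The main obstacle is passing between expectations and the pointwise ratios in $\lambda_t$. The statement's $\lambda_t$ uses realized errors, while REB and $\mathcal{B}^{(2S)}$ are ratios of expectations. I will handle this by working at the level of typical realizations: concentration of the errors in high dimension makes $\mathbb{E}\|\cdot\|\approx\|\cdot\|$, and I will treat $\|J_t\|$ and the ratio as approximately deterministic so that they factor out of the expectation. This step, together with the linearization, is why the result is stated with $\lesssim$ rather than as an exact inequality.

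Finally, unrolling the recursion shows that $\mathrm{REB}(t)$ is a sum of the local biases $\mathcal{B}^{(2S)}(s)$ weighted by products of the $\lambda$'s between $t$ and $s$. Upstream contributions therefore decay when $\lambda<1$ and grow when $\lambda>1$, which gives the attenuation/amplification interpretation.
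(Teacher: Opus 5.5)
Your proof is correct at the same heuristic level as the paper's, and your coupling and Jacobian steps are the same as the paper's. The paper avoids your expectation-versus-realization issue by simply redefining $\mathcal{B}^{(2S)}$ and $\mathrm{REB}$ pointwise in the appendix.

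\textbf{Where the decompositions differ.} The paper anchors the triangle inequality at $\mathbf{y}_{est}(\tilde{\mathbf{y}}_t)$ rather than at $\mathbf{y}$. It works with $\mathrm{REB}_{\mathrm{alt}}(t)=\|\mathbf{y}_{est}(\hat{\mathbf{y}}_t)-\mathbf{y}_{est}(\tilde{\mathbf{y}}_t)\|/\|\mathbf{y}_{est}(\tilde{\mathbf{y}}_t)-\mathbf{y}\|$ and the analogous $\mathcal{B}^{(2S)}_{\mathrm{alt}}(t)$. This gives the alignment-free recursion $\mathrm{REB}_{\mathrm{alt}}(t)\lesssim\mathcal{B}^{(2S)}_{\mathrm{alt}}(t)+\lambda_t\,\mathrm{REB}_{\mathrm{alt}}(t+1)$. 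The paper then uses the law of cosines and $\mathrm{REB}(t)\le 1+\mathrm{REB}_{\mathrm{alt}}(t)$ to convert both terms back to the standard ratios. Consequently alignment enters twice (via $\rho_1$ and $\rho_2$). An additive $1$ must also be absorbed through $1+|\mathcal{B}^{(2S)}(t)-1|=\mathcal{B}^{(2S)}(t)$, which is valid only when $\mathcal{B}^{(2S)}(t)\ge 1$.

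\textbf{What your route gives.} Anchoring at $\mathbf{y}$ with intermediate point $\mathbf{y}_{est}(\hat{\mathbf{y}}^{(2S)}_t)$ produces $\mathcal{B}^{(2S)}(t)$ exactly, with neither $\rho_1$ nor the extra $1$. Your upstream ratio $\|\mathbf{y}_{est}(\hat{\mathbf{y}}_{t+1})-\mathbf{y}_{est}(\tilde{\mathbf{y}}_{t+1})\|/\|\mathbf{y}_{est}(\tilde{\mathbf{y}}_{t+1})-\mathbf{y}\|$ is precisely the paper's $\mathrm{REB}_{\mathrm{alt}}(t+1)$, so you need alignment only there. Since $\mathcal{B}^{(2S)}(t)\le 1+\mathcal{B}^{(2S)}_{\mathrm{alt}}(t)$, your pre-alignment bound is uniformly at least as tight as the paper's full bound. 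If you replace your crude fallback $\mathrm{REB}(t+1)+1$ by the exact law-of-cosines expression $\sqrt{(\mathrm{REB}(t+1)-\rho_2)^2+1-\rho_2^2}$ (angle taken at level $t+1$), you recover the paper's full bound with $1+\mathcal{B}^{(2S)}_{\mathrm{alt}}(t)$ improved to $\mathcal{B}^{(2S)}(t)$.

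\textbf{What the paper's route gives.} It has the same quantity on both sides of its recursion, so it unrolls with no alignment hypothesis at all. Yours mixes $\mathrm{REB}(t)$ with $\mathrm{REB}_{\mathrm{alt}}(t+1)$ and needs alignment at every level to unroll in the stated form.

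\textbf{One small correction.} For nearly parallel, same-orientation errors, the difference has norm close to $\bigl|\,\|\mathbf{y}_{est}(\hat{\mathbf{y}}_{t+1})-\mathbf{y}\|-\|\mathbf{y}_{est}(\tilde{\mathbf{y}}_{t+1})-\mathbf{y}\|\,\bigr|$, not the signed difference. Bounding it by $\mathrm{REB}(t+1)$ times the clean error therefore tacitly requires $\mathrm{REB}(t+1)\ge 1/2$. This is harmless in the exposure-bias regime $\mathrm{REB}\ge 1$. The paper relies on the same condition when it simplifies the square root at $\rho_2\approx 1$ to $\mathrm{REB}(t+1)$, since that square root is really $|\mathrm{REB}(t+1)-1|$.
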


\medskip\noindent
\textbf{Full bound.} Let $\rho_1 = \cos\theta_1$ where $\theta_1$ is the angle between $\mathbf{y}_{est}(\hat{\mathbf{y}}_t^{(2S)})-\mathbf{y}$ and $\mathbf{y}_{est}(\tilde{\mathbf{y}}_t)-\mathbf{y}$, and let $\rho_2 = \cos\theta_2$ where $\theta_2$ is the angle between $\mathbf{y}_{est}(\hat{\mathbf{y}}_t)-\mathbf{y}$ and $\mathbf{y}_{est}(\tilde{\mathbf{y}}_t)-\mathbf{y}$. Then:
\begin{align}
    \mathrm{REB}(t)
    \;\leq\;
    1
    \;+\; \sqrt{\bigl(\mathcal{B}^{(2S)}(t) - \rho_1\bigr)^2 + (1 - \rho_1^2)}
    \;+\; \lambda_t \cdot \sqrt{\bigl(\mathrm{REB}(t+1) - \rho_2\bigr)^2 + (1 - \rho_2^2)}.
    \label{eq:REB_full}
\end{align}
When $\rho_1, \rho_2 \approx 1$, the square-root terms simplify to $\mathcal{B}^{(2S)}(t)$ and $\mathrm{REB}(t+1)$ respectively, recovering the statement above.

\medskip\noindent
\textbf{Sampling definitions.} Both inference and ground-truth paths follow one-step backward diffusion:
\begin{align}
    \hat{\mathbf{y}}_{t-1}^{(2S)} &= \sqrt{\bar{\alpha}_{t-1}}\,\mathbf{y}_{est}(\tilde{\mathbf{y}}_t) + \sigma_{t-1}\,\boldsymbol{\epsilon}, \label{eq:y_2S} \\
    \hat{\mathbf{y}}_{t-1} &= \sqrt{\bar{\alpha}_{t-1}}\,\mathbf{y}_{est}(\hat{\mathbf{y}}_t) + \sigma_{t-1}\,\boldsymbol{\epsilon}, \label{eq:y_inf} \\
    \hat{\mathbf{y}}_T &\sim \mathcal{N}(\mathbf{0}, \mathbf{I}), \label{eq:y_init} \\
    \tilde{\mathbf{y}}_{t} &= \sqrt{\bar{\alpha}_t}\, \mathbf{y} + \sigma_t\, \boldsymbol{\epsilon}, \label{eq:y_true}
\end{align}
where $\boldsymbol{\epsilon} \sim \mathcal{N}(\mathbf{0}, \mathbf{I})$ is the \textbf{same noise draw} shared between equations~\eqref{eq:y_2S} and~\eqref{eq:y_inf}. Similarly, $\hat{\mathbf{y}}_t$ and $\hat{\mathbf{y}}_t^{(2S)}$ use the same noise at their respective levels.

\medskip\noindent
\textbf{The two-steps and reconstruction bias are given as:}
\begin{align}
    \mathcal{B}^{(2S)}(t) \;\triangleq\; \frac{\bigl\| \mathbf{y}_{est}(\hat{\mathbf{y}}_t^{(2S)}) - \mathbf{y} \bigr\|}{\bigl\| \mathbf{y}_{est}(\tilde{\mathbf{y}}_t) - \mathbf{y}\bigr\|}, \label{eq:def_B2S}
\end{align}
\begin{align}
    \mathrm{REB}(t) \;\triangleq\; \frac{\bigl\| \mathbf{y}_{est}(\hat{\mathbf{y}}_t) - \mathbf{y} \bigr\|}{\bigl\| \mathbf{y}_{est}(\tilde{\mathbf{y}}_t) - \mathbf{y}\bigr\|}. \label{eq:def_REB}
\end{align}

\medskip\noindent
\textbf{Alternative biases definitions: }We furthemore define an alternative version of the biases taking into account the distance of estimated values with each other, rather than only their distance to $\mathbf{y}$.
\begin{align}
    \mathcal{B}^{(2S)}_\mathrm{alt}(t) \;\triangleq\; \frac{\bigl\| \mathbf{y}_{est}(\hat{\mathbf{y}}_t^{(2S)}) - \mathbf{y}_{est}(\tilde{\mathbf{y}}_t) \bigr\|}{\bigl\| \mathbf{y}_{est}(\tilde{\mathbf{y}}_t) - \mathbf{y}\bigr\|}, \label{eq:def_B2S}
\end{align}
\begin{align}
    \mathrm{REB}_\mathrm{alt}(t) \;\triangleq\; \frac{\bigl\| \mathbf{y}_{est}(\hat{\mathbf{y}}_t) - \mathbf{y}_{est}(\tilde{\mathbf{y}}_t) \bigr\|}{\bigl\| \mathbf{y}_{est}(\tilde{\mathbf{y}}_t) - \mathbf{y}\bigr\|}. \label{eq:def_REB}
\end{align}

\begin{proof}

We have that:

\begin{equation}
    REB_\mathrm{alt}(t) = \frac{\bigl\|\mathbf{y}_{est}(\hat{\mathbf{y}}_t) - \mathbf{y}_{est}(\tilde{\mathbf{y}}_t)\bigr\|}{\|\mathbf{y}_{est}(\tilde{\mathbf{y}}_t) - \mathbf{y}\|}
    \;\leq\;
    \frac{\bigl\|\mathbf{y}_{est}(\hat{\mathbf{y}}_t^{(2S)}) - \mathbf{y}_{est}(\tilde{\mathbf{y}}_t)\bigr\|}{\|\mathbf{y}_{est}(\tilde{\mathbf{y}}_t) - \mathbf{y}\|}
    \;+\;
    \frac{\bigl\|\mathbf{y}_{est}(\hat{\mathbf{y}}_t) - \mathbf{y}_{est}(\hat{\mathbf{y}}_t^{(2S)})\bigr\|}{\|\mathbf{y}_{est}(\tilde{\mathbf{y}}_t) - \mathbf{y}\|}
\label{eq:triangle_eq}\end{equation}

\medskip\noindent
Both $\hat{\mathbf{y}}_t$ and $\hat{\mathbf{y}}_t^{(2S)}$ are obtained by a backward diffusion step with the same noise draw, so:
\begin{equation}
    \hat{\mathbf{y}}_t - \hat{\mathbf{y}}_t^{(2S)} \;=\; \sqrt{\bar{\alpha}_t}\,\bigl[\mathbf{y}_{est}(\hat{\mathbf{y}}_{t+1}) - \mathbf{y}_{est}(\tilde{\mathbf{y}}_{t+1})\bigr].
\end{equation}
A first-order Jacobian bound then gives:
\begin{equation}
    \bigl\|\mathbf{y}_{est}(\hat{\mathbf{y}}_t) - \mathbf{y}_{est}(\hat{\mathbf{y}}_t^{(2S)})\bigr\|
    \;\lesssim\;
    \|J_t\|\cdot\sqrt{\bar{\alpha}_t}\cdot
    \bigl\|\mathbf{y}_{est}(\hat{\mathbf{y}}_{t+1}) - \mathbf{y}_{est}(\tilde{\mathbf{y}}_{t+1})\bigr\|,
\end{equation}
where $J_t := \nabla\mathbf{y}_{est}|_{\hat{\mathbf{y}}_t^{(2S)}}$.

\medskip\noindent
By definition, $\|\mathbf{y}_{est}(\hat{\mathbf{y}}_{t+1}) - \mathbf{y}_{est}(\tilde{\mathbf{y}}_{t+1})\| = \mathrm{REB}_\mathrm{alt}(t+1)\cdot\|\mathbf{y}_{est}(\tilde{\mathbf{y}}_{t+1})-\mathbf{y}\|$. Therefore (\ref{eq:triangle_eq}) gives :
\begin{equation}
    \mathrm{REB}_\mathrm{alt}(t) \;\lesssim\; \mathcal{B}^{(2S)}_\mathrm{alt}(\sigma_t,\sigma_{t+1}) \;+\; \lambda_t\cdot\mathrm{REB}_\mathrm{alt}(t+1).
    \label{eq:REB_recursive}
\end{equation}
with
\begin{equation}
    \lambda_t \;:=\; \|J_t\|\cdot\sqrt{\bar{\alpha}_t}\cdot\frac{\bigl\|\mathbf{y}_{est}(\tilde{\mathbf{y}}_{t+1})-\mathbf{y}\bigr\|}{\bigl\|\mathbf{y}_{est}(\tilde{\mathbf{y}}_t)-\mathbf{y}\bigr\|}
\end{equation}

\medskip\noindent
\textbf{Relationship between the two definitions.}
Let $a = \mathbf{y}_{est}(\hat{\mathbf{y}}_t^{(2S)}) - \mathbf{y}$ and $b = \mathbf{y}_{est}(\tilde{\mathbf{y}}_t) - \mathbf{y}$, so that the standard definition gives $\mathcal{B}^{(2S)} = \|a\|/\|b\|$ and the alternative gives $\mathcal{B}^{(2S)}_{\mathrm{alt}} = \|a - b\|/\|b\|$.
Letting $\theta$ denote the angle between $a$ and $b$, the law of cosines yields:
\begin{equation}
    \bigl(\mathcal{B}^{(2S)}_{\mathrm{alt}}\bigr)^2
    \;=\;
    \bigl(\mathcal{B}^{(2S)}\bigr)^2 + 1 - 2\,\mathcal{B}^{(2S)}\cos\theta \;=\; \bigl(\mathcal{B}^{(2S)} - \cos \theta \bigr)^2 + (1 - \cos \theta^2),
    \label{eq:alt_vs_std}
\end{equation}

Combining equation~\eqref{eq:alt_vs_std} and the triangle inequality:
\begin{equation}
    \mathcal{B}^{(2S)}(t) - 1
    \;\leq\;
    \mathcal{B}^{(2S)}_{\mathrm{alt}}(t)
    =
    \sqrt{\bigl(\mathcal{B}^{(2S)}(t) - \rho_1\bigr)^2 + (1 - \rho_1^2)},
\end{equation}

The same inequalities applie for the REB:
\begin{equation}
    \mathrm{REB}(t) - 1
    \;\leq\;
    \mathrm{REB}_{\mathrm{alt}}(t) \;\leq\; \sqrt{\bigl(\mathrm{REB}(t) - \rho_2\bigr)^2 + (1 - \rho_2^2)},
\end{equation}
where $\rho_2 = \cos \theta_2$ and $\theta_2$ is the angle between $\mathbf{y}_{est}(\hat{\mathbf{y}}_t) - \mathbf{y}$ and $\mathbf{y}_{est}(\tilde{\mathbf{y}}_t) - \mathbf{y}$.

Therefore we can express the standard REB in terms of the standard 2-steps bias and the next-step REB:
\begin{align}
    \mathrm{REB}(t) &\;\leq\; 1 + \mathrm{REB}_{\mathrm{alt}}(t)\\ 
    \;&\leq\; 1 + \mathcal{B}^{(2S)}_{\mathrm{alt}}(t) \;+\; \lambda_t\cdot\mathrm{REB}_{\mathrm{alt}}(t+1)\\
    \;&\leq\; 1 \;+\; \sqrt{\bigl(\mathcal{B}^{(2S)}(t) - \rho_1\bigr)^2 + (1 - \rho_1^2)} \;+\; \lambda_t \cdot \sqrt{\bigl({REB}(t+1) - \rho_2\bigr)^2 + (1 - \rho_2^2)}
    \label{eq:REB_std_recursive}
\end{align}
\end{proof}

\subsection{Visualization}

\begin{figure}[H]
    \centering
    \includegraphics[width=0.5\linewidth]{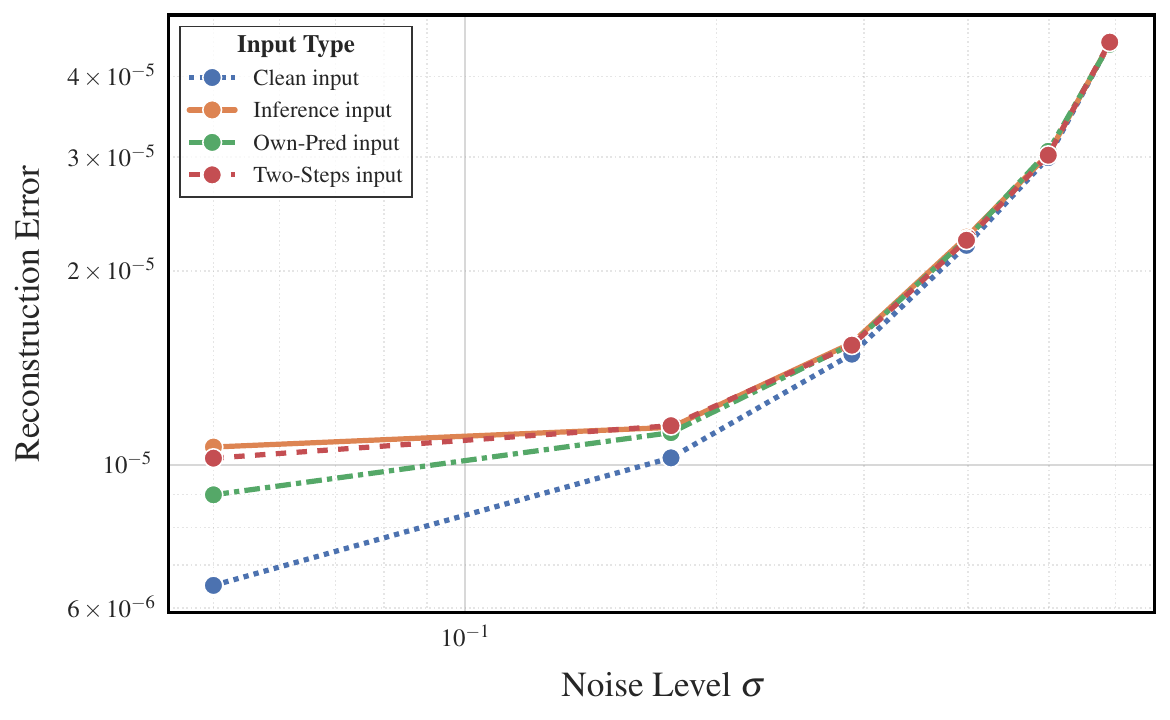}
    \caption{\textbf{Contributions of errors to the REB.} Metrics are reported for the final diffusion steps of a linear schedule model. Two-steps error nearly corresponds to inference-input error even though it only contains errors propagated from current and previous step. Own-prediction error is the major driver of the two-steps bias.}
    \label{fig:Own_pred}
\end{figure}

\newpage

\section{Proof of Proposition~\ref{prop:instability_threshold}}
\label{app:proof_instability}

\begin{proposition}[\textbf{Stability Threshold}, restated]
Assuming \textbf{(A1)} Wiener denoiser structure and \textbf{(A2)} Spectral bias of the neural denoiser (detailed below), $\mathcal{B}^{(own)}_\theta(t)$ is an increasing function of $\mathcal{E}^{\text{clean}}_{\theta}(t)$. In particular, at each noise level $\sigma_t$ and for any threshold $\tau \geq 1$, there exists a critical clean-input error $\gamma(\sigma_t, \tau)$, increasing in $\sigma_t$ and decreasing in $\tau$, such that if $\mathcal{E}^{\text{clean}}_{\theta}(\sigma_t) \leq \gamma(\sigma_t, \tau)$ then $\mathcal{B}^{(own)}_\theta(t) \leq \tau$.
\end{proposition}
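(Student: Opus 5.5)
I will make the two assumptions concrete and then compute $\mathcal{B}^{(own)}$ in closed form in the Fourier domain. For \textbf{(A1)}, I take $\mathbf{y}_{est}$ to be a linear, shift-invariant Wiener-type filter. It acts diagonally on Fourier modes $k$ with gains $h_k = \sqrt{\bar{\alpha}_t}\,S_k/(\bar{\alpha}_t S_k + \sigma_t^2)$, where $S_k$ is the (conditional) power spectrum of the residual target given $\mathbf{x}$. Under this structure,
\begin{align*}
\mathbf{y}_{est}(\tilde{\mathbf{y}}_t)-\mathbf{y} = (H\sqrt{\bar{\alpha}_t}-I)\mathbf{y} + \sigma_t H\boldsymbol{\epsilon}.
\end{align*}
Per mode, the clean-input error is therefore $e_k^2 = (1-\sqrt{\bar{\alpha}_t}h_k)^2 S_k + \sigma_t^2 h_k^2$. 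For \textbf{(A2)}, I model a trained but imperfect network as an \emph{attenuated} version of this filter. It recovers low frequencies first: the learned gains are $h_k(\theta)=\eta_k(\theta)h_k^{\star}$ with $\eta_k\in[0,1]$ non-increasing in $k$. Training progress is a one-parameter monotone family, e.g.\ a cutoff frequency $K_\theta$ beyond which the model outputs nothing. In this family, $\mathcal{E}^{\text{clean}}_\theta(t)$ is a strictly decreasing function of $K_\theta$, so it can serve as the index of the family.

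Next I compute the own-prediction iterate. By definition, $\hat{\mathbf{y}}^{(own)}_t=\sqrt{\bar{\alpha}_t}\,\mathbf{y}_{est}(\tilde{\mathbf{y}}_t)+\sigma_t\boldsymbol{\epsilon}'$. Applying the linear filter once more gives, per mode,
\begin{align*}
\mathbf{y}_{est}(\hat{\mathbf{y}}^{(own)}_t)-\mathbf{y} = (\bar{\alpha}_t H^2 - I)\mathbf{y} + \sqrt{\bar{\alpha}_t}\sigma_t H^2\boldsymbol{\epsilon} + \sigma_t H\boldsymbol{\epsilon}'.
\end{align*}
Its expected squared norm is $\sum_k [(1-\bar{\alpha}_t h_k^2)^2 S_k + \sigma_t^2(\bar{\alpha}_t h_k^4 + h_k^2)]$. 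The ratio of this quantity to $\sum_k e_k^2$ gives an explicit $\mathcal{B}^{(own)}_\theta(t)$; I will work with squared norms and take square roots at the end, ignoring Jensen gaps or treating them via the same ratio. The key observation is modewise. For a fully resolved mode, where $h_k$ is the optimal gain, the ratio of own-prediction error to clean error is close to $1$. For an unresolved mode ($h_k\approx 0$), both errors equal $S_k$, so the ratio is also $1$. The interesting regime is partially attenuated modes, where the model's shrinkage compounds: $1-\bar{\alpha}_t h_k^2 > 1-\sqrt{\bar{\alpha}_t}h_k$ whenever $\sqrt{\bar{\alpha}_t}h_k<1$. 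This compounding of bias is what inflates $\mathcal{B}^{(own)}$.

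The monotonicity claim is the main obstacle, and I would treat it as follows. I would show that as $K_\theta$ decreases (larger $\mathcal{E}^{\text{clean}}$), the excess $\sum_k [\text{own}_k - \tau e_k^2]$ increases. The plan is to differentiate numerator and denominator with respect to the attenuation parameter and show $d\,\mathrm{num}/d\,\mathrm{den} \ge \mathrm{num}/\mathrm{den}$. That inequality amounts to: the marginal own-error per unit of clean error on the newly dropped modes exceeds the current average ratio. Under the spectral-bias ordering, the dropped modes are high-frequency ones with small $S_k$ relative to $\sigma_t^2$. If the clean ratio alone does not settle it, I would impose a mild decay condition on $S_k$ to make it go through. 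I would also check the compounding inequality above carefully, since it is the core of that step.

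Given monotonicity, I define $\gamma(\sigma_t,\tau)$ as the inverse of the map $\mathcal{E}^{\text{clean}}\mapsto\mathcal{B}^{(own)}$ evaluated at $\tau$, i.e.\ $\gamma(\sigma_t,\tau)=\sup\{\mathcal{E}: \mathcal{B}^{(own)}\le\tau\}$. The map is non-decreasing in $\tau$ trivially, so for the paper's stated direction in $\tau$ I would check the convention and, if necessary, state it as the sublevel-set inclusion. Finally, I would argue that $\gamma$ increases in $\sigma_t$. At larger $\sigma_t$ the optimal gains $h_k^\star$ shrink and the noise terms $\sigma_t^2 h_k^2$ dominate both numerator and denominator symmetrically, pushing the ratio toward $1$ for fixed error. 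Equivalently, the achievable clean error floor rises faster than the excess own-error, so a larger clean error is tolerated before the ratio hits $\tau$.
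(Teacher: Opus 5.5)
There is a genuine gap, and it lies in how you model (A2). Representing an imperfect network as an attenuated linear filter $h_k(\theta)=\eta_k h_k^\star$ makes its Jacobian equal to the shrunken gain. A larger error at mode $k$ then comes with a \emph{smaller} sensitivity there, and in that family the claimed monotonicity is false, not merely hard to prove.

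Put $u_k=\bar\alpha_t S_k/(\bar\alpha_t S_k+\sigma_t^2)$. For a mode at optimal gain, your own formulas give clean error $S_k(1-u_k)$ and own-prediction error $S_k(1-u_k)\bigl(2-(1-u_k)^2\bigr)$. The per-mode ratio therefore lies in $[1,2)$ and tends to $2$ for well-resolved modes, because the first estimate's noise is carried over and fresh noise is added. This contradicts your claim that it is close to $1$. A cut-off mode has ratio exactly $1$, the smallest possible value.

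The squared bias is the clean-error-weighted average of these per-mode ratios. Lowering $K_\theta$ by one changes the numerator by $S_K u_K(-1+3u_K-u_K^2)$ and the denominator by $S_K u_K$. The marginal ratio $-1+3u_K-u_K^2$ is $<1\le\mathrm{num}/\mathrm{den}$ for every $u_K\in(0,1)$, so $\mathcal{B}^{(own)}$ strictly \emph{decreases} as $\mathcal{E}^{\text{clean}}$ grows. The inequality $d\,\mathrm{num}/d\,\mathrm{den}\ge\mathrm{num}/\mathrm{den}$ you planned to establish fails mode by mode, and no decay condition on $S_k$ can repair it. A quick sanity check: $h\equiv 0$ has the largest clean error in your family, $\sum_k S_k$, yet $\mathcal{B}^{(own)}=1$ exactly. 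Your compounding observation points the same way. Since $1-\bar\alpha_t h_k^2=(1-\sqrt{\bar\alpha_t}h_k)(1+\sqrt{\bar\alpha_t}h_k)$, the amplification $(1+\sqrt{\bar\alpha_t}h_k)^2$ is largest for the \emph{least} attenuated modes.

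The missing ingredient is the paper's (A1), which ties the Jacobian to the model's \emph{own} residual. It assumes $J_\theta$ is Fourier-diagonal with eigenvalue $\lambda_k=(\sqrt{\bar\alpha}/\sigma^2)E_\theta(k)$, a Tweedie-type relation (Jacobian proportional to posterior covariance). At the optimum this coincides with your $h_k^\star$. The two models part ways precisely on imperfect networks: under (A1), a model that errs more at mode $k$ behaves like the MMSE denoiser of a broader prior, hence is \emph{more} sensitive there.

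The paper also reuses the same noise draw in $\hat{\mathbf{y}}^{(own)}_t$; your fresh $\boldsymbol{\epsilon}'$ computes a different quantity. With the shared draw, the input perturbation is exactly $\sqrt{\bar\alpha}\,\boldsymbol{\varepsilon}$, and a first-order expansion gives $\mathbf{y}_{est}(\hat{\mathbf{y}}^{(own)}_t)-\mathbf{y}\approx(I+\sqrt{\bar\alpha}J_\theta)\boldsymbol{\varepsilon}$. Hence $\mathcal{B}^{(own)}\approx 1+\frac{2\bar\alpha}{\sigma^2}R(\theta)+\bigl(\frac{\bar\alpha}{\sigma^2}\bigr)^2\sum_k E_\theta(k)^3/\sum_k E_\theta(k)$, with $R(\theta)=\sum_k E_\theta(k)^2/\sum_k E_\theta(k)$.

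Spectral bias then enters as a comparison between two models rather than a cutoff family. With $r_k=E_A(k)/E_B(k)\in[0,1]$ non-decreasing and $E_B$ decreasing, $r_k^2\le r_k$ together with Chebyshev's covariance inequality gives $R(\theta_A)\le R(\theta_B)$. The expression depends on the error only through $\bar\alpha E_\theta/\sigma^2$. So for a fixed spectral shape the critical error scales like $\sigma_t^2/\bar\alpha_t$, which yields the $\sigma_t$-monotonicity more directly than your heuristic.

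Your remark on $\tau$ is correct: defined as the supremum of a sublevel set, $\gamma$ is non-decreasing in $\tau$, so the stated direction looks like a slip.
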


\medskip\noindent
The proposition follows from the more precise result below, which makes assumptions (A1) and (A2) explicit.

\begin{proposition}[\textbf{Bias reduction through error-spectral shift}]
Let $\theta_A$ and $\theta_B$ be two denoisers and fix a noise level $\sigma$.  Let $J_\theta$ denote the Jacobian of $\mathbf{y}_{est}$ with respect to its noisy input, and let $E_\theta(k)$ denote the per-wavenumber residual error variance $E_\theta(k) = \mathbb{E}[|\hat\varepsilon_k|^2/N]$, where $\hat\varepsilon_k$ is the Fourier coefficient of the clean-input residual $\mathbf{y}_{est}(\tilde{\mathbf{y}}_\sigma, \mathbf{x}, \sigma) - \mathbf{y}$.
\begin{enumerate}
\item \textbf{(Own-Prediction Bias under Wiener denoiser structure.)} Assuming that $J_\theta$ is diagonal in the Fourier basis with eigenvalue $\lambda_k = (\sqrt{\bar\alpha}/\sigma^2)\,E_\theta(k)$ at each wavenumber $k$, then
\begin{equation}
    \mathcal{B}^{(own)}_\theta(\sigma) \;\approx\; 1 \;+\; \frac{2\bar\alpha}{\sigma^2}\;\frac{\sum_k E_\theta(k)^2}{\sum_k E_\theta(k)} \;+\; \left(\frac{\bar\alpha}{\sigma^2}\right)^{\!2}\;\frac{\sum_k E_\theta(k)^3}{\sum_k E_\theta(k)}.
    \label{eq:bias_error_psd_app}
\end{equation}
\item \textbf{(Spectral bias implies bias reduction.)} Suppose both models satisfy the Wiener structure of (1), that $E_B(k)$ is decreasing in $k$ (larger residual errors at low wavenumbers), and that $0 \leq E_A(k) \leq E_B(k)$ for all $k$ with the ratio $r_k := E_A(k)/E_B(k)$ non-decreasing in $k$ (the relative error reduction is larger at low wavenumbers). Then $R(\theta_A) \leq R(\theta_B)$, and hence $\mathcal{B}^{(own)}_{\theta_A}(\sigma) \leq \mathcal{B}^{(own)}_{\theta_B}(\sigma)$.
\end{enumerate}
\end{proposition}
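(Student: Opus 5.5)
The plan is to linearize the own-prediction map around the ground-truth noisy input and work in the Fourier basis, where the Wiener assumption diagonalizes everything.

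Write the clean-input residual as $\varepsilon = \mathbf{y}_{est}(\tilde{\mathbf{y}}_\sigma)-\mathbf{y}$. By \eqref{eq:y_own}, the own-prediction input shares the same noise draw as $\tilde{\mathbf{y}}_\sigma$, so
\[
\hat{\mathbf{y}}^{(own)}_\sigma-\tilde{\mathbf{y}}_\sigma=\sqrt{\bar\alpha}\,\varepsilon .
\]
A first-order expansion, in the same spirit as the Jacobian step in the proof of Proposition~\ref{prop:renoising_attenuation}, then gives
\[
\mathbf{y}_{est}(\hat{\mathbf{y}}^{(own)}_\sigma)-\mathbf{y}\approx (I+\sqrt{\bar\alpha}\,J_\theta)\,\varepsilon .
\]
Under (A1), $J_\theta$ is diagonal in Fourier with eigenvalue $\lambda_k=(\sqrt{\bar\alpha}/\sigma^2)E_\theta(k)$. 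The $k$-th Fourier coefficient of the own-prediction residual is therefore $(1+\bar\alpha E_\theta(k)/\sigma^2)\hat\varepsilon_k$.

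I will work with the squared-norm version of the bias, with expectations taken inside. This is a ratio-of-expectations approximation, and it is the reason for the ``$\approx$'' in the statement. Taking $\mathbb{E}|\hat\varepsilon_k|^2/N=E_\theta(k)$ gives
\[
\frac{\sum_k (1+\bar\alpha E_k/\sigma^2)^2E_k}{\sum_k E_k},
\]
and expanding the square yields exactly the three terms of \eqref{eq:bias_error_psd_app}. I will note explicitly that the statement should be read as a bound on (or in terms of) the squared ratio. Since the quantity is $\geq 1$, monotonicity conclusions are unaffected by this choice.

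For part (2), I take $R(\theta)$ to be the excess
\[
R(\theta) = c_1\,\frac{\sum E^2}{\sum E}+c_2\,\frac{\sum E^3}{\sum E},\qquad c_1,c_2>0.
\]
It then suffices to show that $\sum_k E_A^m/\sum_k E_A\leq\sum_k E_B^m/\sum_k E_B$ for $m=2,3$. Write $E_A=r_kE_B$ and define the weights $w_k=E_B(k)$. Then
\[
\frac{\sum E_A^m}{\sum E_A}=\frac{\sum w_k\,r_k^m E_B^{m-1}}{\sum w_k\,r_k}.
\]
Comparing with $\sum w_kE_B^{m-1}/\sum w_k$, the required inequality is equivalent to
\[
\Big(\sum w_k\, r_k^m E_B^{m-1}\Big)\Big(\sum w_k\Big)\leq\Big(\sum w_k\,r_k\Big)\Big(\sum w_kE_B^{m-1}\Big).
\]
I split this into two steps. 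First, use $r_k^m\leq r_k$, which holds because $0\leq r_k\leq1$, to replace $r_k^m$ by $r_k$ on the left. Then apply Chebyshev's sum inequality (the rearrangement inequality) with weights $w_k$: the sequence $r_k$ is non-decreasing in $k$ while $E_B^{m-1}$ is non-increasing in $k$, so they are oppositely ordered. This gives
\[
\sum w_k r_kE_B^{m-1}\cdot\sum w_k\leq\sum w_kr_k\cdot\sum w_kE_B^{m-1},
\]
which is what is needed. Having $R(\theta_A)\leq R(\theta_B)$, the bias comparison follows immediately from part (1).

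Finally, to get the headline Proposition~\ref{prop:instability_threshold}, I view a decrease in $\mathcal{E}^{\text{clean}}$ under spectral bias (A2) as exactly such a shift $E_B\to E_A$. This makes $\mathcal{B}^{(own)}$ an increasing function of $\mathcal{E}^{\text{clean}}$ along the training path. I then define
\[
\gamma(\sigma,\tau)=\sup\{\mathcal{E}:\mathcal{B}^{(own)}\leq\tau\}.
\]
It is decreasing in $\tau$ trivially, since the feasible set grows with $\tau$. It is increasing in $\sigma$ because the prefactors $\bar\alpha/\sigma^2=(1-\sigma^2)/\sigma^2$ decrease in $\sigma$, so a larger error budget is admissible at higher noise.

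The main obstacle is this last step together with the linearization. ``Increasing function of $\mathcal{E}^{\text{clean}}$'' is only meaningful along a family of spectra ordered as in part (2). The spectral shape could in principle change with $\sigma$, which breaks the clean monotonicity of $\gamma$ in $\sigma$. I would handle this by fixing a normalized spectral profile, $E(k)=\mathcal{E}^2 p_k$, under which $\mathcal{B}^{(own)}$ becomes an explicit increasing quadratic in $\mathcal{E}^2\bar\alpha/\sigma^2$ and can be inverted in closed form.
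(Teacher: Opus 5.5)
Your proposal is correct and follows essentially the same route as the paper. For part (1) you linearize around $\tilde{\mathbf{y}}_\sigma$ with input perturbation $\sqrt{\bar\alpha}\,\boldsymbol{\varepsilon}$, diagonalize in Fourier, and replace $|\hat\varepsilon_k|^2$ by $N E_\theta(k)$; for part (2) you write $E_A = r_k E_B$, use $r_k^m \le r_k$, and apply Chebyshev's sum inequality with weights $E_B(k)$. Your write-up is in fact tighter than the paper's: you say explicitly that the expansion is for the squared ratio, you treat the cubic term ($m=3$) rather than calling it analogous, and you divide the Chebyshev inequality by $\sum_k r_k E_B(k)$ directly. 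By contrast, the paper's displayed intermediate bound $\frac{\sum_k r_k E_B(k)^2}{\sum_k r_k E_B(k)} \le \frac{\sum_k r_k E_B(k)}{\sum_k E_B(k)}\cdot\frac{\sum_k E_B(k)^2}{\sum_k E_B(k)}$ does not follow from Chebyshev and fails for constant $r_k<1$, even though its final conclusion $R(\theta_A)\le R(\theta_B)$ is right.
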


\begin{proof}
\textbf{Part (1).}  Let $\boldsymbol{\varepsilon} = \mathbf{y}_{est}(\tilde{\mathbf{y}}_\sigma, \mathbf{x}, \sigma) - \mathbf{y}$ denote the clean-input residual. In the own-prediction setting~\eqref{own-prediction-bias}, the model receives the re-noised input $\hat{\mathbf{y}}_\sigma = \sqrt{\bar\alpha}\,\mathbf{y}_{est}(\tilde{\mathbf{y}}_\sigma, \mathbf{x}, \sigma) + \sigma\,\boldsymbol{z}$ instead of the ground-truth $\tilde{\mathbf{y}}_\sigma = \sqrt{\bar\alpha}\,\mathbf{y} + \sigma\,\boldsymbol{z}$. The input perturbation is thus $\boldsymbol{\delta} = \hat{\mathbf{y}}_\sigma - \tilde{\mathbf{y}}_\sigma = \sqrt{\bar\alpha}\,\boldsymbol{\varepsilon}$.

\medskip\noindent By first-order Taylor Expansion around $\tilde{\mathbf{y}}_\sigma$, we can write:
\begin{align}
    \mathbf{y}_{est}(\hat{\mathbf{y}}_\sigma, \mathbf{x}, \sigma) - \mathbf{y}
    &\approx \boldsymbol{\varepsilon} + J_\theta\,\boldsymbol{\delta}
    = \boldsymbol{\varepsilon} + \sqrt{\bar\alpha}\,J_\theta\,\boldsymbol{\varepsilon}.
\end{align}

Taking the squared norm and dividing by $\|\boldsymbol{\varepsilon}\|^2$, we obtain:
\begin{equation}
    \mathcal{B}^{(own)}_\theta(\sigma)
    = \frac{\|\boldsymbol{\varepsilon} + \sqrt{\bar\alpha}\,J_\theta\,\boldsymbol{\varepsilon}\|^2}{\|\boldsymbol{\varepsilon}\|^2}
    \;=\; 1 \;+\; \frac{2\sqrt{\bar\alpha}}{\|\boldsymbol{\varepsilon}\|^2}\,\boldsymbol{\varepsilon}^\top J_\theta\,\boldsymbol{\varepsilon} \;+\; \frac{\bar\alpha}{\|\boldsymbol{\varepsilon}\|^2}\,\|J_\theta\,\boldsymbol{\varepsilon}\|^2
    \label{eq:bias_full_taylor}
\end{equation}

Under the Wiener structure, $J_\theta$ is diagonal in the Fourier basis with eigenvalue $\lambda_k = (\sqrt{\bar\alpha}/\sigma^2)\,E_\theta(k)$ at wavenumber $k$.  By Parseval's theorem:
\begin{align}
    \boldsymbol{\varepsilon}^\top J_\theta\,\boldsymbol{\varepsilon}
    &= \frac{1}{N}\sum_k \lambda_k\,|\hat\varepsilon_k|^2, &
    \|J_\theta\,\boldsymbol{\varepsilon}\|^2
    &= \frac{1}{N}\sum_k \lambda_k^2\,|\hat\varepsilon_k|^2, &
    \|\boldsymbol{\varepsilon}\|^2
    &= \frac{1}{N}\sum_k |\hat\varepsilon_k|^2.
\end{align}

Since $\mathbb{E}[|\hat\varepsilon_k|^2/N] = E_\theta(k)$, we replace $|\hat\varepsilon_k|^2$ by its expectation $N\,E_\theta(k)$ to obtain:
\begin{align}
    \frac{2\sqrt{\bar\alpha}\,\boldsymbol{\varepsilon}^\top J_\theta\,\boldsymbol{\varepsilon}}{\|\boldsymbol{\varepsilon}\|^2}
    &\;\approx\; \frac{2\sqrt{\bar\alpha}\sum_k \lambda_k\,E_\theta(k)}{\sum_k E_\theta(k)}
    = \frac{2\bar\alpha}{\sigma^2}\;\frac{\sum_k E_\theta(k)^2}{\sum_k E_\theta(k)}, \\[4pt]
    \frac{\bar\alpha\,\|J_\theta\,\boldsymbol{\varepsilon}\|^2}{\|\boldsymbol{\varepsilon}\|^2}
    &\;\approx\; \frac{\bar\alpha\sum_k \lambda_k^2\,E_\theta(k)}{\sum_k E_\theta(k)}
    = \left(\frac{\bar\alpha}{\sigma^2}\right)^{\!2}\;\frac{\sum_k E_\theta(k)^3}{\sum_k E_\theta(k)},
\end{align}
where we substituted $\lambda_k = (\sqrt{\bar\alpha}/\sigma^2)\,E_\theta(k)$.  Combining both terms with~\eqref{eq:bias_full_taylor} yields~\eqref{eq:bias_error_psd_app}.

\textbf{Part (2).}  By~\eqref{eq:bias_error_psd_app}, $\mathcal{B}^{(own)}_\theta$ is determined (up to monotone transformations) by the ratio
\[
    R(\theta) \;:=\; \frac{\sum_k E_\theta(k)^2}{\sum_k E_\theta(k)},
\]
which is the $E_\theta(k)$-weighted mean of $E_\theta(k)$.  (The cubic term $\sum_k E_\theta(k)^3/\sum_k E_\theta(k)$ behaves analogously; we focus on $R$ for clarity.)  It therefore suffices to show that $R(\theta_A) \leq R(\theta_B)$.

\medskip\noindent
Write $E_A(k) = r_k\,E_B(k)$ with $r_k \in [0,1]$ non-decreasing.  Since $R(\theta_A) = \sum_k r_k^2\,E_B(k)^2 \big/ \sum_k r_k\,E_B(k)$, we have :
\begin{equation*}
\begin{aligned}
    R(\theta_A) &= \frac{\sum_k r_k^2\,E_B(k)^2}{\sum_k r_k E_B(k)} \\
    &\leq \frac{\sum_k r_k\,E_B(k)^2}{\sum_k r_k E_B(k)} &\text{(Jensen's inequality)} \\
    &\leq \frac{\sum_k r_k\,E_B(k)}{\sum_k E_B(k)} \cdot \frac{\sum_k E_B(k)^2}{\sum_k E_B(k)} & \quad \text {(Chebyshev covariance inequality)}\\
    &\leq \cdot \frac{\sum_k E_B(k)^2}{\sum_k E_B(k)} &\text{since} \quad \frac{\sum_k r_k\,E_B(k)}{\sum_k E_B(k)} \leq 1 \\
    &= R(\theta_B)
\end{aligned}
\end{equation*}

where Jensen's Inequality states that $r_k^2 \leq r_k$ for every $k$, given $0 \leq r_k \leq 1$ ; and Chebyshev covariance inequality states that for two sequences $f(k)$ and $g(k)$ that are oppositely monotone (one non-decreasing, the other non-increasing) and non-negative weights $w_k \geq 0$:
\[
    \left(\sum_k w_k\right)\left(\sum_k w_k f(k)\,g(k)\right) \;\leq\; \left(\sum_k w_k f(k)\right)\left(\sum_k w_k g(k)\right),
\]
or equivalently $\mathbb{E}_w[f(k)\,g(k)] \leq \mathbb{E}_w[f(k)]\,\mathbb{E}_w[g(k)]$. Given the probability weights $w_k = E_B(k)/\sum_j E_B(j)$, since $r_k$ is non-decreasing in $k$ and $E_B(k)$ is non-increasing in $k$ by assumption, applying this with $f(k) = r_k$ and $g(k) = E_B(k)$ gives $\mathrm{Cov}_{w}(r,\,E_B) \leq 0$, i.e.,
\[
    \mathbb{E}_{w}[r_k\,E_B(k)] \;\leq\; \mathbb{E}_{w}[r_k]\;\mathbb{E}_{w}[E_B(k)].
\]

Finally, by ~\eqref{eq:bias_error_psd_app}, $\mathcal{B}^{(own)}_\theta$ is increasing in $R$, so $R(\theta_A) \leq R(\theta_B)$ implies $\mathcal{B}^{(own)}_{\theta_A}(\sigma) \leq \mathcal{B}^{(own)}_{\theta_B}(\sigma)$.
\end{proof}

\paragraph{Remark.}
The decreasing-$E_\theta(k)$ condition in (2) is satisfied whenever $P(k)$ is decreasing (e.g.\ turbulence spectra $P(k)\sim k^{-\beta}$), since the MMSE residual of the Wiener denoiser $E(k) = \sigma^2 P(k)/(\bar\alpha P(k)+\sigma^2)$ is an increasing function of $P(k)$.  The spectral-bias condition (the tendency of gradient-trained networks to learn low-frequency components first) is necessary: without it, a uniform rescaling $E_A(k) = c\,E_B(k)$ leaves $R$ unchanged, and hence $\mathcal{B}^{(own)}$ unchanged too.

\newpage

\section{Proof of Proposition~\ref{prop:slow_decrease}}
\label{app:proof_slow_decrease}

\begin{proposition}[\textbf{Slow Error Decrease Principle}, restated]
Under the stability condition established in Proposition~\ref{prop:instability_threshold} and assumptions \textbf{(A1)}--\textbf{(A2)}, the schedule minimizing \eqref{bias-free} contains no unnecessary noise levels: every intermediate $\sigma_k$ is necessary in the sense that removing it — i.e., going directly from $\sigma_{k-1}$ to $\sigma_{k+1}$ — would violate the bias constraint, i.e. $\mathcal{B}^{(2S)}(k-1) > \tau$. In particular, the greedy construction that always takes the largest feasible jump is optimal.
\end{proposition}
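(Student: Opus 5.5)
The argument is an exchange argument by contradiction, combined with a monotonicity argument for the greedy part. First I would make the two assumptions precise. For \textbf{(A1)}, finite capacity, I would posit a fixed total budget: the achievable clean-input error at each noise level depends only on the capacity share allocated to it. For a schedule $\mathcal{S}=\{\sigma_0<\dots<\sigma_T\}$ with allocations $c_k\ge 0$, $\sum_k c_k\le C$, the clean-input errors are $\mathcal{E}^{\text{clean}}_\theta(\sigma_k)=g(\sigma_k,c_k)$, with $g$ strictly decreasing in $c_k$. Together with \textbf{(A2)}, this lets me compare schedules by comparing capacity allocations.

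Now suppose an optimal stable schedule $\mathcal{S}^*$ contains an intermediate level $\sigma_k$ whose removal keeps every constraint satisfied, i.e. the jump $\sigma_{k+1}\to\sigma_{k-1}$ satisfies $\mathcal{B}^{(2S)}(k-1)\le\tau$. I would then build $\mathcal{S}'=\mathcal{S}^*\setminus\{\sigma_k\}$ and give its freed capacity $c_k>0$ to the level $\sigma_0$.

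Feasibility of $\mathcal{S}'$ is checked in three places. At levels above $\sigma_{k+1}$ and at $\sigma_{k+1}$ itself, nothing changes. At $\sigma_{k-1}$, the new two-step constraint holds by the non-necessity hypothesis. At $\sigma_0$, the extra capacity strictly lowers $\mathcal{E}^{\text{clean}}_\theta(0)$. By Proposition~\ref{prop:instability_threshold}, $\mathcal{B}^{(own)}$ is increasing in the clean-input error, so it can only decrease; I would argue the propagation residual $\delta$ does not grow either, because its input error from level $1$ is unchanged. The objective of (\ref{bias-free}) is then strictly smaller, contradicting optimality, so every intermediate level is necessary.

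For the greedy claim, I would argue by induction from $\sigma_0$ upward that the greedy sequence $\sigma^g_j$ dominates any feasible sequence, i.e. $\sigma^g_j\ge\sigma_j$. The key lemma is monotonicity: if the jump $\sigma'\to\sigma_t$ is feasible, then so is any smaller jump $\sigma''\to\sigma_t$ with $\sigma_t<\sigma''<\sigma'$. This lemma would be justified by (A2): a smaller source level gives a smaller propagated error, and $\gamma$ is increasing in $\sigma$. Given the lemma, the greedy schedule reaches $\sigma_T$ in at most as many steps as any feasible schedule. It therefore uses the fewest levels, frees the most capacity for $\sigma_0$, and attains the minimum.

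The main obstacle is the capacity model. As stated, "finite capacity" is informal, and the argument needs freed capacity to transfer to $\sigma_0$ without changing the other levels' errors. I would state this explicitly as a separable budget assumption rather than try to derive it. A secondary subtlety is that giving capacity to $\sigma_0$ must not break the constraint at $\sigma_0$. That follows from Proposition~\ref{prop:instability_threshold} only for $\mathcal{B}^{(own)}$, so I would also assume $\delta$ is non-increasing in the receiving level's accuracy.
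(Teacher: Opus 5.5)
Your exchange argument for the first claim is the paper's own: delete the unnecessary level, hand its capacity to $\sigma_0$, and contradict optimality. Your separable budget and your assumption on $\delta$ make explicit what the paper leaves implicit. The paper's (A1) only guarantees that adding a level raises the error at \emph{some} level. The paper also never re-checks the constraint at $\sigma_0$, where $\mathcal{B}^{(2S)}(0)$ is a ratio whose denominator shrinks.

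The genuine gap is in the greedy part. In your induction step you have $\sigma_j\le\sigma^g_j<\sigma_{j+1}$ and know that the jump $\sigma_{j+1}\to\sigma_j$ is feasible. To get $\sigma^g_{j+1}\ge\sigma_{j+1}$ you need the jump $\sigma_{j+1}\to\sigma^g_j$ to be feasible, i.e.\ monotonicity in the \emph{receiving} endpoint with the source fixed. Your lemma is monotonicity in the \emph{source} with the receiving level fixed. It only says that the feasible sources for a target $\sigma$ form an interval $(\sigma,M(\sigma)]$, and it is never used in the induction, since greedy takes a maximum over a finite set anyway.

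It also does not imply domination. Take $M(0)=5$, $M(4)=100$ and $M(n)=n+1$ for $n\ge5$. Every feasible set is an interval, yet greedy goes $0\to5\to6\to\cdots$, while $0\to4\to100$ reaches $\sigma_T=100$ in two steps. What you need is that $M(\sigma)$ be non-decreasing: if $\sigma'\to\sigma_t$ is feasible and $\sigma_t\le\sigma''<\sigma'$, then $\sigma'\to\sigma''$ is feasible. This is the formal content of the paper's informal remark that $\gamma(\sigma,\tau)$ increases in $\sigma$. A higher receiving level tolerates a larger inherited error, and the renoised perturbation $\sqrt{\bar\alpha}\,\boldsymbol{\varepsilon}$ is smaller relative to $\sigma$. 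That is the lemma to state and justify.

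There is a second, smaller gap. The step ``fewest levels, hence most capacity for $\sigma_0$'' does not follow from your budget model. It needs the capacity required to make a level feasible to be uniform across levels, or non-increasing in $\sigma$; then $\sigma^g_j\ge\sigma_j$ makes the greedy schedule no more expensive. Relatedly, in your model the feasibility of a jump depends on the allocations at both endpoints. You must therefore fix those allocations (e.g.\ minimal stabilizing ones) before speaking of a single feasibility relation on which greedy operates. The paper's proof glosses over both points too, but your argument explicitly relies on them.
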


The proof relies on the instability threshold (Proposition~\ref{prop:instability_threshold}) and the following assumptions:

\begin{enumerate}
    \item[\textbf{(A1)}] \textbf{Finite capacity trade-off:} Given a model trained on schedule $\mathcal{S}$, adding an extra noise level $\sigma'$ to $\mathcal{S}$ and retraining to convergence necessarily increases $\mathcal{E}^{\text{clean}}_\theta(\sigma_s)$ for some $\sigma_s \in \mathcal{S}$.
    \item[\textbf{(A2)}] \textbf{Monotone error decay:} $\mathcal{E}^{\text{clean}}_\theta(\sigma)$ is strictly decreasing in $\sigma$: lower noise levels yield strictly lower clean-input error.
\end{enumerate}

\begin{proof}
The final inference error decomposes as $\mathcal{E}^{\text{inf}}_\theta(0) = \mathrm{REB}(0) \cdot \mathcal{E}^{\text{clean}}_\theta(0)$. Minimizing this requires both $\mathrm{REB}(0) \approx 1$ and $\mathcal{E}^{\text{clean}}_\theta(0)$ as small as possible.

Suppose the optimal schedule $\mathcal{S}^*$ contains an intermediate noise level $\sigma_k$ ($0 < k < T$) that is unnecessary, i.e., $\mathcal{B}^{(2S)}(\sigma_{k-1}, \sigma_{k+1}) \leq \tau$ so that $\sigma_k$ can be skipped without violating stability. Removing $\sigma_k$ from $\mathcal{S}^*$ yields a strictly shorter schedule. By \textbf{(A1)}, retraining on this shorter schedule frees capacity, which the model can reallocate to $\sigma_0$, strictly decreasing $\mathcal{E}^{\text{clean}}_\theta(\sigma_0)$ — possible by \textbf{(A2)}, since $\sigma_0$ is the lowest noise level and thus has the most room for improvement relative to other levels. This strictly reduces the final inference error, contradicting the optimality of $\mathcal{S}^*$.

Therefore every intermediate step in the optimal schedule is necessary. Note that the bias constraint $\mathcal{B}^{(2S)}(\sigma_t, \sigma_{t+1}) \leq \tau$ need not be tight at each step — it suffices that no larger jump is feasible. Furthermore, satisfying the bias constraint becomes increasingly easy at higher noise levels, since the instability threshold $\gamma(\sigma, \tau)$ is increasing in $\sigma$: the model can afford larger clean-input errors at high noise, allowing larger jumps. This justifies the greedy construction, which always takes the largest feasible jump and is therefore optimal.
\end{proof}

\newpage

\section{Accuracy of the Proxy Estimator}\label{gradient-appendix}

\begin{figure}[H]
    \centering
    \includegraphics[width=\linewidth]{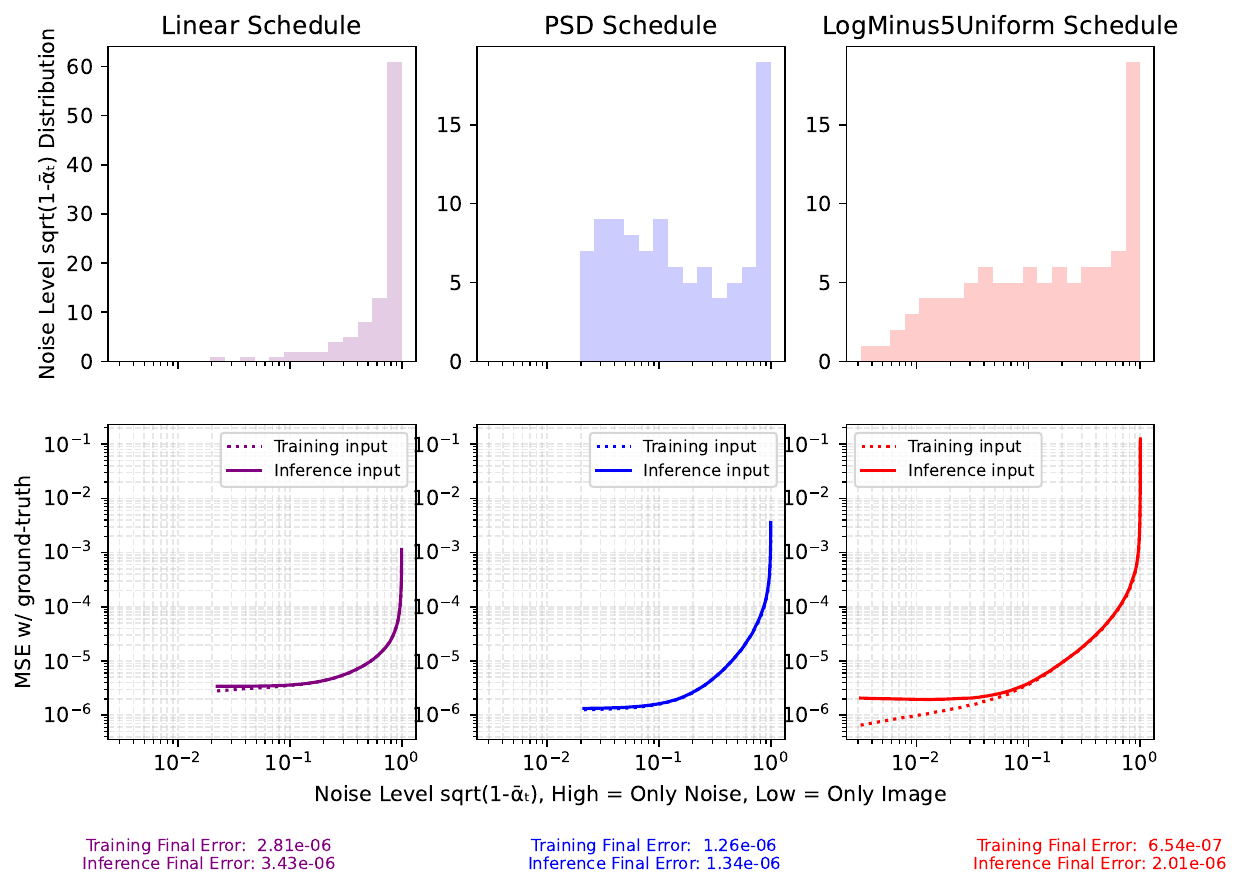}
    \includegraphics[width=\linewidth]{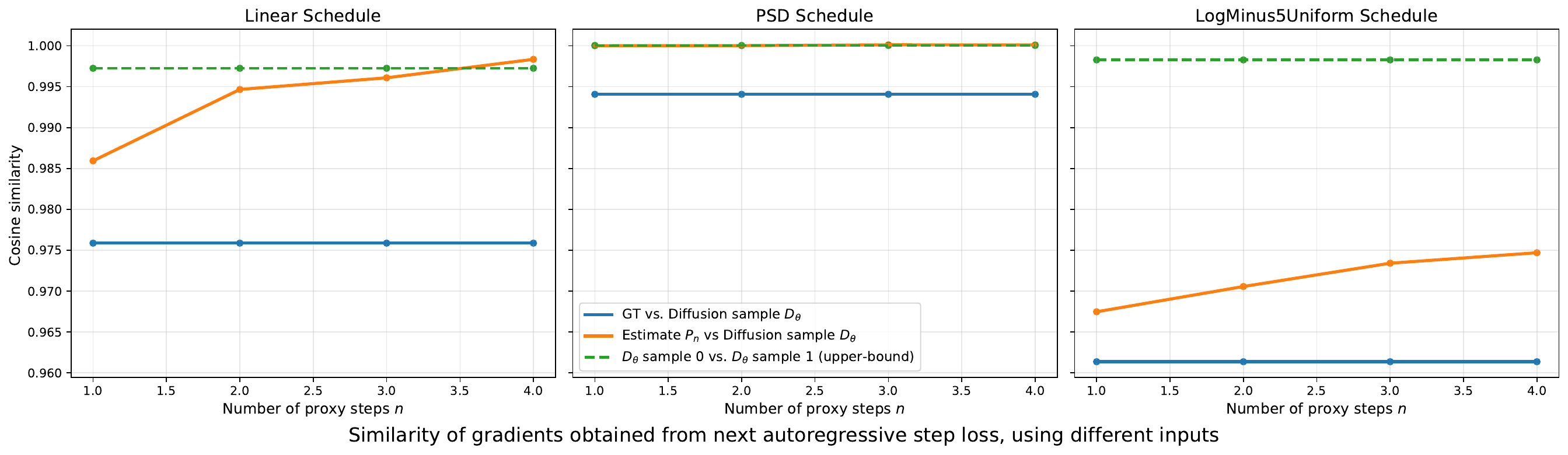}
    \caption{Similarity of gradients obtained from the second step loss $\mathcal{L}_{\text{P-2UT}}$, using as input either the ground-truth data (= teacher forcing), the true previous step diffusion sample (= full unrolled training) or the proxy estimate $\mathcal{P}^{(n)}_\theta$, as a function of the number of steps $n$. PSD and LogMinus5Uniform Schedules are arbitrary schedules that we defined. The alignment of the gradients well reflect the different exposure-biases. In particular, with PSD Schedule, which doesn't suffer from exposure-bias, the proxy estimate is already fully accurate in a single step ($n=1$). For LogMinus5Uniform Schedule, the estimate still isn't accurate after 4 steps.}
    \label{fig:gradient_similarity}
\end{figure}

\newpage
\section{FSD Evolution}\label{FSD-appendix}

\begin{figure}[H]
    \centering
    \includegraphics[width=0.8\linewidth]{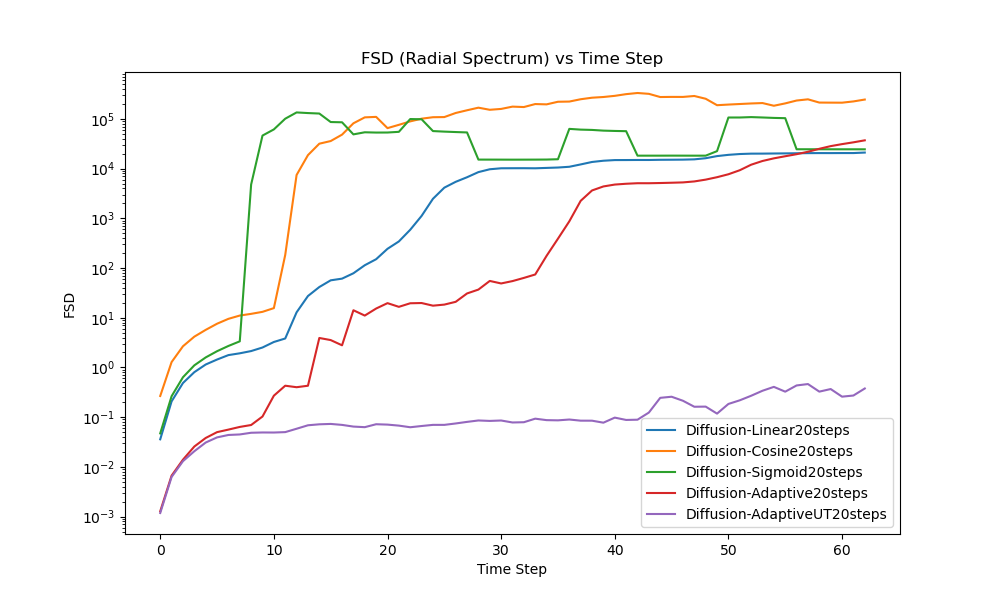}
    \caption{Kolmogorov Flow}
    \label{fig:fsd_kolmo}
\end{figure}

\begin{figure}[H]
    \centering
    \includegraphics[width=0.8\linewidth]{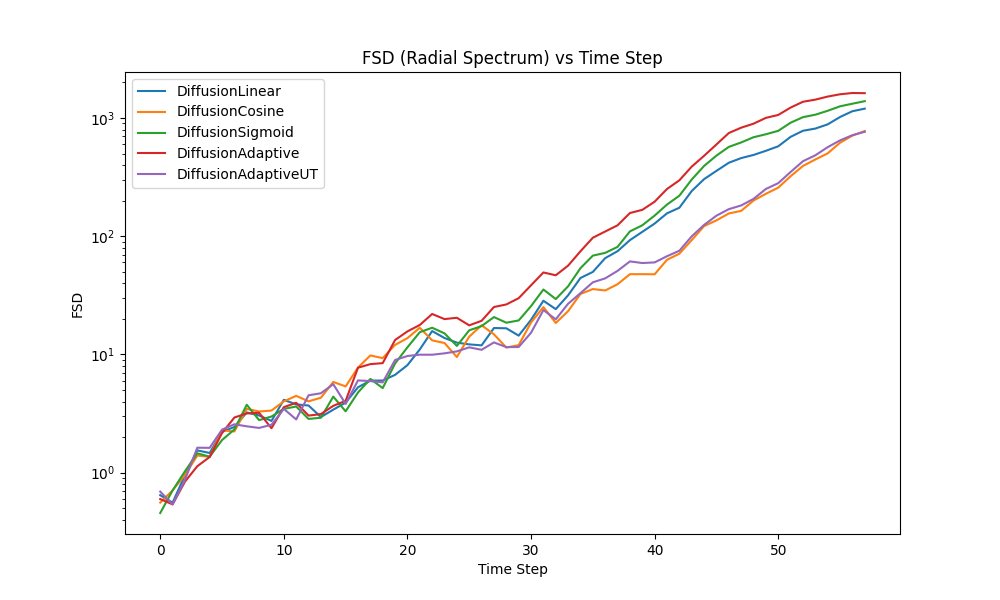}
    \caption{Transonic Flow}
    \label{fig:fsd_tra}
\end{figure}

Our proxy unrolled training is particularly useful when the model suffers from artifacts, as is the case on the Kolmogorov Flow. The improvement is however marginal when the distribution shift is steady and without jumps, as in Transonic Flow.

\newpage 

\section{Implementation details} \label{implementation}

\subsection{Algorithm Pseudocode}

\begin{algorithm}[tbh]
\caption{Phase 1 --- Exploration}
\label{alg:exploration}
\begin{algorithmic}[1]
\STATE {\bfseries Require:} bias tolerance $\tau$, exploration grid $\boldsymbol{\sigma}^{\text{exp}} = \{\sigma^{(1)}, \ldots, \sigma^{(N)}\}$ (log-uniform), shared weights $w = 1$
\STATE {\bfseries Initialize:} active schedule $\boldsymbol{\sigma}^{\text{active}} \leftarrow \boldsymbol{\sigma}^{\text{exp}}$, solved set $\boldsymbol{\sigma}^{\text{solved}} \leftarrow \emptyset$
\WHILE{$\boldsymbol{\sigma}^{\text{active}} \neq \emptyset$}
    \STATE Train $\theta$ on $\boldsymbol{\sigma}^{\text{active}}$ with uniform weights until convergence
    \FOR{each $\sigma^{(i)} \in \boldsymbol{\sigma}^{\text{active}}$}
        \IF{$\mathcal{B}^{(\text{own})}_\theta(\sigma^{(i)}) \leq \tau$}
            \STATE Save checkpoint $\theta^*(\sigma^{(i)}) \leftarrow \theta$
            \STATE $\boldsymbol{\sigma}^{\text{active}} \leftarrow \boldsymbol{\sigma}^{\text{active}} \setminus \{\sigma^{(i)}\}$
            \STATE $\boldsymbol{\sigma}^{\text{solved}} \leftarrow \boldsymbol{\sigma}^{\text{solved}} \cup \{\sigma^{(i)}\}$
        \ENDIF
    \ENDFOR
    \IF{no new level solved this pass}
        \STATE \textbf{break}
    \ENDIF
\ENDWHILE
\STATE \textbf{return} $\{\theta^*(\sigma)\}_{\sigma \in \boldsymbol{\sigma}^{\text{solved}}}$
\end{algorithmic}
\end{algorithm}

\begin{algorithm}[tbh]
\caption{Phase 2 --- Greedy Schedule Construction}
\label{alg:greedy}
\begin{algorithmic}[1]
\STATE {\bfseries Require:} checkpoints $\{\theta^*(\sigma)\}_{\sigma \in \boldsymbol{\sigma}^{\text{solved}}}$, bias tolerance $\tau$
\STATE $\sigma_0 \leftarrow \min \boldsymbol{\sigma}^{\text{solved}}$, \quad $t \leftarrow 0$, \quad $\mathcal{S} \leftarrow [\sigma_0]$
\WHILE{$\sigma_t < \sigma_T$}
    \STATE $\sigma_{t+1} \leftarrow \max \left\{ \sigma' \in \boldsymbol{\sigma}^{\text{solved}} : \mathcal{B}^{(2S)}_{\theta^*(\sigma'),\, \theta^*(\sigma_t)}(\sigma_t, \sigma') \leq \tau \right\}$
    \STATE $\mathcal{S} \leftarrow \mathcal{S} \cup [\sigma_{t+1}]$, \quad $t \leftarrow t + 1$
\ENDWHILE
\STATE Fine-tune a single shared model $\theta$ on $\mathcal{S}$, warm-started from $\theta^*(\sigma_0)$
\STATE \textbf{return} $\mathcal{S}$, $\theta$
\end{algorithmic}
\end{algorithm}

\subsection{Training Hyperparameters}

We set $\tau = 1.05$ across all tasks as it constrains the two-steps bias while allowing for flexibility. A stricter value could potentially reduce the reconstruction error, but at the cost of making the optimization process more complex.

If not mentioned otherwise, each baseline diffusion model uses $T=20$ steps. A training run contains $E=1000$ epochs on Kolmogorov Flow and KS, $E=2000$ epochs on Transonic Flow.

On the other hand, unrolled fine-tuning is done for 200 epochs on Kolmogorov Flow and KS, and 400 epochs for Transonic.

\section{Dataset and Data Generation}

\begin{itemize}
    \item Kolmogorov Flow : We use the generation pipeline from \cite{rozet2023score}, to generate 800 trajectories for training, 100 for testing, 100 for validation. Generation scripts can be obtained in \href{https://github.com/francois-rozet/sda}{SDA}.
    \item Transonic Flow : We obtain the dataset from the ACDM benchmark \cite{kohl2023benchmarking} (repository \href{https://github.com/tum-pbs/autoreg-pde-diffusion}{ACDM}). The testing experiments are ran over the Extrapolate test case.
    \item Kuramoto-Sivashinsky : We generate the data using the pipeline provide in \cite{brandstetter2022lie} (repository \href{https://github.com/brandstetter-johannes/LPSDA/tree/master}{LPSDA}). We increase the number of validation and testing trajectories to 512, and fix the timestep $\Delta t$ to 0.2, as well as the grid spacing $\Delta x$, following \cite{shysheya2024conditional}.
\end{itemize}

\end{document}